\newcommand{\IN}{\mathbbm{N}}
\newcommand{\IR}{\mathbbm{R}}
\newcommand{\cLc}{\mathcal{L}}
\newcommand{\OPTc}{OPT}
\newcommand{\cL}{\mathcal{L}}
\newcommand{\cN}{\mathcal{N}}
\newcommand{\cC}{\mathcal{C}}
\newcommand{\cM}{\mathcal{M}}
\newcommand{\OPTdiam}{OPT_{diam}}
\newcommand{\bigO}{{\cal O}}
\newcommand{\dpr}[2]{\left\langle#1,#2\right\rangle}
\DeclareMathOperator{\cost}{cost}
\DeclareMathOperator{\argmin}{argmin}
\DeclareMathOperator{\argmax}{argmax}
\renewcommand{\epsilon}{\varepsilon}
\newcommand{\eps}{\epsilon}
\newcommand{\tmu}{\tilde{\mu}}
\newcommand{\tsigma}{\tilde{\sigma}}
\newcommand{\tw}{\tilde{w}}
\newcommand{\ttheta}{\tilde{\theta}}
\newcommand{\OneTo}[1]{\in[#1]}
\DeclarePairedDelimiter\abs{\lvert}{\rvert}%
\DeclarePairedDelimiter\norm{\lVert}{\rVert}%
\newtheorem{definition}{Definition}
\newtheorem{theorem}[definition]{Theorem}
\newtheorem{corollary}[definition]{Corollary}
\newtheorem{lemma}[definition]{Lemma}
\newtheorem{remark}[definition]{Remark}
\newtheorem{problem}[definition]{Problem}
\begin{document}
\title{Hard-Clustering with Gaussian Mixture Models}
\author{Johannes Bl\"omer}
\author{Sascha Brauer}
\author{Kathrin Bujna}
\affil
{
  Department of Computer Science \\
  Paderborn University \\
  33102 Paderborn, Germany
}
\maketitle

\begin{abstract}
 Training the parameters of statistical models to describe a given data set is a central task in the field of data mining and machine learning. 
A very popular and powerful way of parameter estimation is the method of maximum likelihood estimation (MLE). 
Among the most widely used families of statistical models are mixture models, especially, mixtures of Gaussian distributions. 

 A popular hard-clustering variant of the MLE problem is the so-called complete-data maximum likelihood estimation (CMLE) method. 
 The standard approach to solve the CMLE problem  is the Classification-Expectation-Maximization (CEM) algorithm \cite{Celeux92}. 
 Unfortunately, it is only guaranteed that the algorithm converges to some (possibly arbitrarily poor) stationary point of the objective function. 
 
 In this paper, we present two algorithms for a restricted version of the CMLE problem. 
 That is, our algorithms approximate reasonable solutions to the CMLE problem which satisfy certain natural properties.
 Moreover, they compute solutions whose cost (i.e. complete-data log-likelihood values) are at most a factor $(1+\epsilon)$ worse than the cost of the solutions that we search for. 
Note the CMLE problem in its most general, i.e. unrestricted, form is not well defined and allows for trivial optimal solutions that can be thought of as degenerated solutions.

\end{abstract}


\clearpage

\section{Preliminaries}

Given set of observations, the objective of the CMLE problem is to find a Gaussian mixture model and a hard clustering with maximum complete-data likelihood.
In this section, we will first describe and define this objective function.
Then, we will present an alternating optimization scheme for this problem.
However, the problem is not well-defined.
Hence, we will restrict the problem to reasonable instances and solutions.

\subsection{Complete-Data Log-Likelihood}
Let $X\subset\IR^d$ be a finite set of observations. 
Given a spherical Gaussian distribution $\cN_d(\mu,\sigma)$, the \emph{likelihood} that all $x\in X$ have been drawn according to $\cN_d(\mu,\sigma)$ is given by 
\[ \prod_{x\in X} \cN_d(x | \mu,\sigma)\ ,\]
assuming that the observations have been drawn independently at random.

\begin{definition}
Given a finite set $X\subset \IR^d$ and a spherical Gaussian distribution with mean $\mu\in\IR^d$ and variance $\sigma^2 \in\IR$, let 
\[ \cL_X(\mu,\sigma^2) 
\coloneqq-\ln \left( \prod_{x\in X} p(x|\mu,\sigma^2)\right) 
=   \frac{\abs{X}d}{2}\ln(2\pi \sigma_k^2) + \frac{1}{2\sigma_k^2} \sum_{x\in X} \norm{x-\mu_k}^2 \ .\]
We denote the minimal value by $\OPTc(X,1) = \min_{(\mu,\sigma^2)} \cL_X(\mu,\sigma^2)$. 
\end{definition}

Now consider a Gaussian mixture model (GMM) given by parameters $\theta = \{(w_k,\mu_k,\sigma^2_k)\}_{k=1}^K$.
Drawing an observation $x_n$ according to a GMM corresponds to a two-step process: 
\begin{enumerate}
  \item Draw a component $z_n\OneTo{K}$ with probability $p(z_n=k|\theta)=w_k$.
  \item Draw an observation $x_n \in X$ according to $\cN_d(\mu_{z_n},\sigma_{z_n})$.
\end{enumerate}
Note that the assignment $z_n\OneTo{K}$ is a (latent) random variable in this two-step process.
With the help of this random variable, we can compute the likelihood that observation $x\in X$ has been generated by the $k$-th component of the GMM, i.e.
\[ p(x_n,z_n=k|\theta) = p(z_n=k|\theta)\cdot p(x_n|z_n=k,\theta) = w_k\cdot \cN_d(x|\mu_k,\sigma_k)\ .\]
Since $x_n$ and $z_n$ completely describe the two-step process, the likelihood $p(x_n,z_n|\theta)$ is also called \emph{complete-data likelihood}, while $p(x_n|\theta) = \sum_{z_n=1}^K p(x_n,z_n|\theta)$ is refered to as \emph{(marginal) likelihood}.

Assume, we are given a set of observations $X=\{x_n\}_{n=1}^N$ and assignments $\{z_n\}_{n=1}^N$.
Then, the likelihood that all observations have been drawn according to a GMM $\theta$ and that each $x_n$ has been generated by the $z_n$-th component, is given by 
\begin{align} 
 \prod_{n=1}^N p(x_n,z_n|\theta) = \prod_{n=1}^N  w_{z_n}\cdot \cN_d(x_n|\mu_{z_n},\sigma_{z_n}) \ ,\label{eq:prelim:complete-data-1}
\end{align}
assuming that the observations have been drawn inpendently at random.
Note that the assignments $\{z_n\}_{n=1}^N$ define a partition $\cC=\dot\cup_{k=1}^K C_k$ via $x_n\in C_k$ iff $z_n=k$. 
Hence, we can also rewrite Equation~\eqref{eq:prelim:complete-data-1} as
\[ \prod_{k=1}^K \prod_{x_n\in C_k} p(x_n,z_n=k|\theta) =  \prod_{k=1}^K \prod_{x_n\in C_k}  w_k\cdot \cN_d(x_n|\mu_k,\sigma_k)\ .  \]
By taking (negative) logarithm of this expression, we obtain
\begin{align*}
 &-\log\left(\prod_{k=1}^K \prod_{x_n\in C_k} p(x_n,z_n=k|\theta)\right) \\
 &= \sum_{k=1}^K \sum_{x_n\in C_k}\left( \ln(w_k) + \ln\left(\cN_d(x_n|\mu_k,\Sigma_k\right)\right) \\
 &= \sum_{k=1}^K  \cLc_{C_k}(\mu_k,\sigma^2_k) - \ln(w_k)\cdot \abs{C_k} \ .
\end{align*}

\begin{definition}
 Given a finite set $X\subset \IR^d$, a partition $\cC = \{C_1,\ldots,C_K\}$ of $X$, and a mixture of spherical Gaussians with parameters $\theta = \{(w_k,\mu_k,\sigma^2_k)\}_{k=1}^K$, 
 we call
 \[ \cLc_X(\theta, \cC) \coloneqq \sum_{k=1}^K  \cLc_{C_k}(\mu_k,\sigma^2_k) - \ln(w_k)\cdot \abs{C_k}  
\]
the complete-data negative log-likelihood. 
\end{definition}

Note that a solution maximizing the complete-data likelihood also minimizes the complete-data negative log-likelihood, and vice versa. 
Therefore, we define the \emph{complete-cata maximum likelihood estimation} (CMLE) problem as follows. 

\begin{problem}[CMLE]\label{prob-cmle}
Given a finite set $X\subset \IR^d$ and an integer $K\in \IN$, find a partition $\cC = \{C_1,\ldots,C_K\}$ of $X$ and a mixture of spherical Gaussians with parameters $\theta = \{(w_k,\mu_k,\sigma^2_k)\}_{k=1}^K$ minimizing $\cLc_X(\theta, \cC)$.
We denote the minimal value by $\OPTc(X,K)$.

For a fixed model $\theta$, we let $\cLc_X(\theta) = \min_{\cC}\cLc_X(\theta,\cC)$. 
Analogously, for a fixed clustering $\cC$, we let $\cLc_X(\cC) = \min_{\theta}\cLc_X(\theta,\cC)$.
\end{problem}

\begin{definition}
Given parameters $(w_k,\mu_k,\sigma_k^2)$ and a cluster $C_k\subseteq X$, we let
\[  \cLc_{x}(w_k,\mu_k,\sigma_k^2) \coloneqq \frac{d}{2}\ln(2\pi \sigma_k^2) + \frac{1}{2\sigma_k^2} \norm{x-\mu_k}^2 - \ln(w_k), \ \]
and 
\[ \cLc_{C_k}(w_k,\mu_k,\sigma_k^2) \coloneqq \sum_{x\in C_k} \cLc_{x}(w_k,\mu_k,\sigma_k^2) \  .\]
\end{definition}

\begin{remark}
 For all partitions $\cC = \{C_1,\ldots,C_K\}$, we have
\[ \cLc_X(\cC) = \sum_{k=1}^K  \OPTc(C_k,1) - \ln\left(\frac{\abs{C_k}}{\abs{X}}\right)\cdot \abs{C_k} \ .\]
 For all $\theta = \{(w_1,\mu_1,\sigma^2_1),\ldots,(w_K,\mu_K,\sigma^2_K)\}$, we have
 \[ \cLc_X(\theta) = \sum_{n=1}^N \argmin_{k\OneTo{K}} \{ \cLc_{x}(w_k,\mu_k,\sigma_k^2) \} \ . \]
\end{remark}

\subsection{Alternating Optimization Scheme (CEM algorithm)}

An \emph{alternating optimization algorithm} for this problem is given by the following first order optimality conditions.
Fixing the partition $\cC=\{C_k\}_{k=1}^K$, the optimal mixture of spherical Gaussians is given by $\theta = \{(w_k,\mu_k,\sigma^2_k)\}_{k=1}^K$ with
\[ w_k = \frac{\abs{C_k}}{\abs{X}}\ ,\qquad \mu_k = \frac{1}{\abs{C_k}}\sum_{x_n\in C_k} x_n\ ,\qquad \sigma_k^2 = \frac{1}{d\abs{C_k}} \sum_{x_n\in C_k} \Vert x_n - \mu_k\Vert^2\ .\]
Fixing the Gaussian mixture model $\theta = \{(w_k,\mu_k,\sigma^2_k)\}_{k=1}^K$, the optimal partition $\cC=\{C_k\}_{k=1}^K$ is given by assigning each point to its most likely component, i.e. 
\[ x_n\in C_k \Leftrightarrow k=\argmax_{l\OneTo{K}} p(z_n=l|x_n,\theta)\ , \]
where 
\[ p(z_n=k|x_n,\theta) = \frac{w_k\cN(x_n|\mu_k,\sigma_k^2)}{\sum_{l=1}^K w_l\cN(x_n|\mu_l,\sigma_l^2)}\ ,  \]
which is the \emph{posterior probability} that $x_n$ has been generated by the $k$-th component of the given mixture.

If we repeatedly compute these update formulas, the solution converges to a local extremum or a saddlepoint of the likelihood function.

A proof of the correctenss of these update formulas (which we omit here) uses the following lemma.
\begin{lemma}\label{lem-wonderful}
Let $X\subset \IR^d$ be a finite set. Define 
\[ \mu(X)=\frac 1{\abs{X}} \sum_{x\in X} x\ .\]
Then, for all $y\in\IR^d$
\[ \sum_{x\in X}\norm{x-y}^2=\sum_{x\in X}\norm{x-\mu(X)}^2+\abs{X}\cdot \norm{y-\mu(X)}^2\ .\]
In particular, $\mu(X)=\argmin_{y\in \IR^d} \sum_{x\in X}\norm{x-y}^2$.
\end{lemma}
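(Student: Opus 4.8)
The plan is to prove this by the standard ``parallel axis'' (bias--variance) decomposition: introduce the centroid $\mu \coloneqq \mu(X)$ inside each squared norm and expand. Concretely, I would write $x-y = (x-\mu) + (\mu-y)$ for every $x\in X$ and expand the squared norm via bilinearity of the inner product, obtaining
\[ \norm{x-y}^2 = \norm{x-\mu}^2 + 2\dpr{x-\mu}{\mu-y} + \norm{\mu-y}^2 \ . \]
Summing this identity over all $x\in X$ then splits the left-hand side into three contributions, and it remains to evaluate each of the three resulting sums.

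The only step that requires any thought is the cross term $2\sum_{x\in X}\dpr{x-\mu}{\mu-y}$. Here I would pull the fixed vector $\mu-y$ out of the sum using linearity in the first argument, reducing the cross term to $2\dpr{\sum_{x\in X}(x-\mu)}{\mu-y}$. By the definition $\mu = \frac{1}{\abs{X}}\sum_{x\in X} x$ we have $\sum_{x\in X}(x-\mu) = \sum_{x\in X} x - \abs{X}\mu = 0$, so the cross term vanishes identically. This is the crux of the argument: the decomposition holds precisely because $\mu$ is the centroid. Meanwhile the last sum is trivial, since $\norm{\mu-y}^2$ does not depend on $x$ and hence contributes $\abs{X}\cdot\norm{\mu-y}^2 = \abs{X}\cdot\norm{y-\mu}^2$. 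Combining the three pieces yields the claimed identity.

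For the ``in particular'' statement, I would simply observe that in the established identity the term $\sum_{x\in X}\norm{x-\mu(X)}^2$ is a constant independent of $y$, while the term $\abs{X}\cdot\norm{y-\mu(X)}^2$ is nonnegative and equals $0$ if and only if $y = \mu(X)$ (using $\abs{X}>0$ and positive-definiteness of the norm). Hence $\sum_{x\in X}\norm{x-y}^2$ is minimized exactly at $y=\mu(X)$, giving $\mu(X) = \argmin_{y\in\IR^d}\sum_{x\in X}\norm{x-y}^2$.

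I do not expect a genuine obstacle here, as the whole argument is an elementary expansion; the only point demanding care is justifying that the cross term vanishes, which relies solely on the defining property of $\mu(X)$ and should be stated explicitly rather than glossed over.
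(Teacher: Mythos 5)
Your proof is correct and complete: the expansion $x-y=(x-\mu)+(\mu-y)$, the vanishing of the cross term via $\sum_{x\in X}(x-\mu(X))=0$, and the argmin conclusion from nonnegativity of $\abs{X}\cdot\norm{y-\mu(X)}^2$ are all properly justified. The paper itself states this lemma without proof (it is explicitly omitted), and your argument is exactly the standard bias--variance decomposition one would expect to fill that gap, consistent in spirit with the inner-product manipulations the paper does spell out for its Lemma~\ref{lem-1mean-characterization}.
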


Note that an optimal CMLE solution is not changed by this algorithm.
Hence, an optimal CMLE solution is completely defined by a partition or a Gaussian mixture model.
Similarly, if we refer to a partition or a Gaussian mixture as a CMLE solution we assume that the missing parameters are as defined by the update formulas given above, respectively.

\subsection{Well-Defined Instances}

Unfortunately, the CMLE problem is not well defined in this form.
For example, you could choose $C_1 = \{x\}$ and $\mu_1 = x$ for some $x\in X$.
Then, as $\sigma_1 \rightarrow 0$ we get that $\cLc_K(X) \rightarrow -\infty$.
Consequently, we impose the following restrictions on instances.

\begin{definition}\label{def:well-defined}
We call $X = \dot\bigcup_{k=1}^K C_k$ a \emph{well-defined partition} if
\begin{enumerate}
	\item for all $k\OneTo{K}:\ \abs{C_k} \geq 2$.\label{rest:minpts2}
\end{enumerate}
We call $X$ itself a \emph{well-defined instance} if 
\begin{enumerate}
       \setcounter{enumi}{1}
	\item $\forall x,y\in X,x\neq y:\ \norm{x-y}^2 \geq \frac{4d}{\pi}$.\label{rest:dist}
\end{enumerate}
We denote $X = \dot\bigcup_{k=1}^K C_k$ as a \emph{well-defined solution} if $X$ is a well-defined instance and $\{C_k\}_{k=1}^K$ is a well-defined partition.
\end{definition}

In the following, we prove that, with these restrictions, the CMLE problem is well defined. 
That is, the minimum in Problem~\ref{prob-cmle} is well defined ($\cLc_K(X)> -\infty$).  
Moreover, we will see (Lemma~\ref{lem-lower-bound-variance}) that  for
the optimal solution we have $\sigma^2_k\ge \frac{1}{2\pi}$ or 
\begin{align}\label{eq1}
2\pi\sigma^2_k\ge 1 \quad \text{for $k\OneTo{K}$.}
\end{align}

First of all, note that the sum of squared distances between the points in $X$ and the mean $\mu(X)$ can be rewritten using pairwise distances (which are lower bounded in Restriction~\ref{rest:dist}).

\begin{lemma}\label{lem-1mean-characterization}
Let $X\subset \IR^d$ be a finite set and $\mu(X):=\frac{1}{\abs{X}} \sum_{x\in X}$ its mean, then
\begin{align*}
\sum_{x\in X} \norm{x-\mu(X)}^2=\frac{1}{2\abs{X}}\sum_{x\in X}\sum_{y\in X}\norm{x-y}^2.
\end{align*}
\end{lemma}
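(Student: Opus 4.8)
The plan is to reduce the statement to Lemma~\ref{lem-wonderful}, which already provides the key parallel-axis decomposition and hence does most of the work. Writing $\mu = \mu(X)$ and $n = \abs{X}$, I would instantiate that lemma with $y$ taken to be an arbitrary point of $X$: for each fixed $y\in X$,
\[ \sum_{x\in X}\norm{x-y}^2 = \sum_{x\in X}\norm{x-\mu}^2 + n\cdot\norm{y-\mu}^2 . \]

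Next I would sum this identity over all $y\in X$. The left-hand side becomes exactly the double sum $\sum_{x\in X}\sum_{y\in X}\norm{x-y}^2$ appearing in the claim. On the right-hand side, the first term does not depend on $y$, so summing over the $n$ choices of $y$ multiplies it by $n$; the second term becomes $n\sum_{y\in X}\norm{y-\mu}^2$, which is the same quantity $\sum_{x\in X}\norm{x-\mu}^2$ up to renaming the summation index. Hence both right-hand terms equal $n\sum_{x\in X}\norm{x-\mu}^2$, and their sum is $2n\sum_{x\in X}\norm{x-\mu}^2$.

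Putting the two sides together gives $\sum_{x\in X}\sum_{y\in X}\norm{x-y}^2 = 2n\sum_{x\in X}\norm{x-\mu}^2$, and dividing by $2n$ yields the claim. There is essentially no obstacle here: the only point requiring a little care is the bookkeeping of the double sum, but the diagonal terms with $x=y$ simply contribute $0$ on both sides and need no separate treatment. As a self-contained alternative, one could instead expand both sides using $\norm{x-y}^2=\norm{x}^2-2\dpr{x}{y}+\norm{y}^2$ together with $\sum_{x\in X}x=n\mu$; each side then collapses to $\sum_{x\in X}\norm{x}^2-n\norm{\mu}^2$. Routing through Lemma~\ref{lem-wonderful} is, however, the shorter and cleaner path.
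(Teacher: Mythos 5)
Your proof is correct, and it takes a genuinely different route from the paper's. The paper proves the identity from scratch: it expands the double sum via $\norm{x-y}^2=\dpr{x}{x}+\dpr{y}{y}-2\dpr{x}{y}$, rewrites $\sum_{x,y}\dpr{x}{y}$ using $\sum_{x\in X}x=\abs{X}\,\mu(X)$, and then folds the result back together using the orthogonality fact $\sum_{x\in X}\dpr{\mu(X)}{x-\mu(X)}=0$, arriving directly at $2\abs{X}\sum_{x\in X}\norm{x-\mu(X)}^2$. You instead instantiate Lemma~\ref{lem-wonderful} at each $y\in X$ and sum the resulting parallel-axis identities over $y$, which is slick: the left side assembles the double sum, the two right-hand terms each contribute $\abs{X}\sum_{x\in X}\norm{x-\mu(X)}^2$, and dividing by $2\abs{X}$ finishes it; your remark that the diagonal $x=y$ terms need no special handling is right, since they vanish on both sides consistently. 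Your approach buys brevity and reuses an already-stated result, but note one structural caveat: the paper states Lemma~\ref{lem-wonderful} \emph{without} proof (its proof is explicitly omitted), so the paper's direct expansion keeps Lemma~\ref{lem-1mean-characterization} self-contained, whereas yours inherits a dependence on an unproved (if entirely standard) lemma. Your self-contained fallback --- expanding both sides to $\sum_{x\in X}\norm{x}^2-\abs{X}\norm{\mu(X)}^2$ --- is essentially the paper's computation in disguise, so you have both routes covered.
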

\begin{proof}
\begin{align*}
\sum_{x\in X}\sum_{y\in X}\norm{x-y}^2 & = \sum_{x\in X}\sum_{y\in X}\dpr{x-y}{x-y} & \\
 & = \sum_{x\in X}\sum_{y\in X}(\dpr{x}{x}+\dpr{y}{y}-2\dpr{x}{y} & \\
 & = 2\abs{X}\sum_{x\in X}\dpr{x}{x} -2 \sum_{x\in X}\sum_{y\in X}\dpr{x}{y} & \\
& =  2\abs{X}\sum_{x\in X}\dpr{x}{x}-2\abs{X}\sum_{x\in X}\dpr{x}{\mu(X)} & \\
& = 2\abs{X}\sum_{x\in X}\dpr{x}{x-\mu(X)}  & \\
& = 2\abs{X}\sum_{x\in X}\dpr{x-\mu(X)}{x-\mu(X)}  & \tag{using $\abs{X}\sum_{x\in X}\dpr{\mu(X)}{x-\mu(X)}=0$}\\
& = 2\abs{X}\sum_{x\in X}\norm{x-\mu(X)}^2.
\end{align*}
\end{proof}

Now using the restriction on the minimum pairwise difference between points (Restriction~\ref{rest:dist}) and on the minimum number of points (Restriction~\ref{rest:minpts2}) in a cluster, we can lower bound the variance of each cluster.
This directly yields Equation~\eqref{eq1} and our claim that the problem is well-defined under the restrictions given in Definition~\ref{def:well-defined}.

\begin{lemma}\label{lem-lower-bound-variance}
Let $Y$ be a subset of a set $X$ that satisfies Restriction~\ref{rest:dist} from Definition~\ref{def:well-defined} and that contains at least two different elements. Then, 
\[\sigma(Y)^2=\frac{1}{\abs{Y}d}\sum_{y\in Y}\norm{y-\mu(Y)}^2 \geq \frac{1}{2\pi}\ .\]
\end{lemma}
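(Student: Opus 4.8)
The plan is to rewrite the empirical variance $\sigma(Y)^2$ entirely in terms of pairwise squared distances, since those are exactly the quantities controlled by Restriction~\ref{rest:dist}. First I would invoke Lemma~\ref{lem-1mean-characterization} with $Y$ in place of $X$, which gives
\[ \sum_{y\in Y}\norm{y-\mu(Y)}^2 = \frac{1}{2\abs{Y}}\sum_{y\in Y}\sum_{z\in Y}\norm{y-z}^2. \]
Substituting this into the definition of $\sigma(Y)^2$ yields
\[ \sigma(Y)^2 = \frac{1}{\abs{Y}d}\cdot\frac{1}{2\abs{Y}}\sum_{y\in Y}\sum_{z\in Y}\norm{y-z}^2 = \frac{1}{2d\abs{Y}^2}\sum_{y\in Y}\sum_{z\in Y}\norm{y-z}^2. \]

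Next I would lower-bound the double sum. The diagonal terms $y=z$ contribute $0$, while every off-diagonal term satisfies $\norm{y-z}^2 \geq \frac{4d}{\pi}$ by Restriction~\ref{rest:dist} (here I use that $Y\subseteq X$, so the restriction applies to all pairs drawn from $Y$). Since there are exactly $\abs{Y}(\abs{Y}-1)$ ordered pairs with $y\neq z$, I obtain
\[ \sum_{y\in Y}\sum_{z\in Y}\norm{y-z}^2 \geq \abs{Y}(\abs{Y}-1)\cdot\frac{4d}{\pi}. \]
Plugging this into the previous display and cancelling the factor $d$ gives
\[ \sigma(Y)^2 \geq \frac{4\abs{Y}(\abs{Y}-1)}{2\pi\abs{Y}^2} = \frac{2(\abs{Y}-1)}{\pi\abs{Y}}. \]

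Finally, I would use that $Y$ contains at least two distinct elements, so $\abs{Y}\geq 2$ and hence $\frac{\abs{Y}-1}{\abs{Y}}\geq \frac{1}{2}$ (the ratio is increasing in $\abs{Y}$, so the worst case is $\abs{Y}=2$). This yields $\sigma(Y)^2 \geq \frac{1}{\pi} \geq \frac{1}{2\pi}$, as claimed. I do not expect any serious obstacle, since the argument is a direct computation powered by Lemma~\ref{lem-1mean-characterization}. The only points requiring care are that the double sum ranges over ordered pairs (so the pair count is $\abs{Y}(\abs{Y}-1)$ rather than $\binom{\abs{Y}}{2}$), and that the constant $\frac{4d}{\pi}$ in Restriction~\ref{rest:dist} is evidently chosen precisely so that the dimension $d$ cancels and the resulting bound clears the target $\frac{1}{2\pi}$ with room to spare.
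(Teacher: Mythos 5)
Your proposal is correct and follows essentially the same route as the paper: rewrite $\sigma(Y)^2$ via Lemma~\ref{lem-1mean-characterization}, lower-bound the double sum by the number of distinct pairs times the minimum pairwise distance from Restriction~\ref{rest:dist}, and use $\abs{Y}\geq 2$. The only difference is that you count all $\abs{Y}(\abs{Y}-1)$ ordered pairs while the paper uses $\binom{\abs{Y}}{2}$, so your bound comes out a factor $2$ sharper ($\frac{1}{\pi}$ instead of exactly $\frac{1}{2\pi}$), which is harmless.
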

\begin{proof}
\begin{align*}
\sigma(Y)^2 & = \frac{1}{\abs{Y}d}\sum_{y\in Y}\norm{y-\mu(Y)}^2 \\
 & =   \frac{1}{2\abs{Y}^2d}\sum_{x\in Y}\sum_{y\in Y} \norm{x-y}^2 \tag{using Lemma~\ref{lem-1mean-characterization}} \\
 & \geq \frac{1}{2\abs{Y}^2d} \binom{\abs{Y}}{2} \min_{x,y\in Y,x\ne y} \norm{x-y}^2 &  \\
 & \geq \frac{1}{8d}  \min_{x,y\in Y,x\neq y} \norm{x-y}^2 \\
 & \geq \frac{1}{2\pi} \tag{using Restriction~\ref{rest:dist}}
\end{align*}
\end{proof}

Throughout the rest of this paper, we will restrict the search space of CMLE to well-defined solutions.
In particular, we only consider the optimal solution among all well-defined solutions.

\subsection{Well-Balanced Instances}

A central idea behind the algorithms that we present in this paper is that we do not allow somewhat \emph{degenerate} instances.
This means that we can find a function $f$ in the number of clusters that can be used to lower bound the number of points in a cluster and a function $g$ that can be used to lower bound the costs $\OPTc(C_k,1)$ of optimal clusters $C_k$.

\begin{definition}[well-balanced]\label{def:well-balanced}
Let $f,g:\IN \rightarrow \IR$.
We denote a partition $X = \dot\bigcup_{k=1}^K C_k$ as $f$-\emph{balanced} if for all $k\OneTo{K}$
\[ \abs{C_k} \geq \frac{\abs{X}}{f(K)}\ . \]

Furthermore, we denote the partition as an $(f,g)$\emph{-balanced} CMLE solution if it is $f$-balanced and additionally for all $k\OneTo{K}$
\[ \OPTc(C_k,1) \geq \frac{1}{g(K)} \cdot \sum_{k=1}^K \OPTc(C_k,1)\ . \]
\end{definition}

\begin{definition}
 Given a finite set $X\subset \IR^d$ and $K\in\IN$, we let
  \[ \OPTdiam(X,K) = \min_{\substack{\{C_1,\ldots,C_K\}, \\ \dot\cup_{k=1}^K C_k = X}} \ \max_{k\OneTo{K}}\  \max_{x,y\in C_k} \norm{x-y}\ . \]
\end{definition}

\begin{lemma}[From $f$-balanced to $(f,g)$-balanced]\label{lem:relation-between-balance-defs}
 An $f$-balanced solution $X = \dot\bigcup_{k=1}^K C_k$ is also an $\left(f,\Gamma\cdot f\right)$-balanced CMLE solution,
 where $\Gamma \leq 2\cdot\ln\left(32\pi \cdot \OPTdiam(X,K)\right) + \ln(K) + 1$. 
\end{lemma}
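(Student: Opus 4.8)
The plan is to reduce everything to a closed form for the single-Gaussian cost and then compare two sums. First I would record that, by Lemma~\ref{lem-wonderful} the optimal mean in $\cLc_C(\mu,\sigma^2)$ is $\mu(C)$, and optimizing the remaining one-dimensional function over $\sigma^2$ (exactly the variance update formula) yields
\[ \OPTc(C,1) = \frac{\abs{C}\,d}{2}\left(\ln\left(2\pi\sigma(C)^2\right) + 1\right), \qquad \sigma(C)^2 = \frac{1}{\abs{C}\,d}\sum_{x\in C}\norm{x-\mu(C)}^2 . \]
Since the solution is already assumed $f$-balanced, only the second defining inequality of an $(f,\Gamma f)$-balanced CMLE solution must be established, i.e.\ $\OPTc(C_k,1)\ge \frac{1}{\Gamma f(K)}\sum_{j}\OPTc(C_j,1)$ for every $k$. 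I would prove this by a per-cluster lower bound on $\OPTc(C_k,1)$ together with a global upper bound on $\sum_j \OPTc(C_j,1)$, and then take the ratio.

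For the lower bound I would invoke Lemma~\ref{lem-lower-bound-variance}: each $C_k$ is a subset of the well-defined instance $X$ with at least two points, so $2\pi\sigma(C_k)^2\ge 1$ and hence $\ln(2\pi\sigma(C_k)^2)\ge 0$. The closed form then gives $\OPTc(C_k,1)\ge \frac{\abs{C_k}\,d}{2}$, and $f$-balancedness ($\abs{C_k}\ge \abs{X}/f(K)$) turns this into the uniform bound $\OPTc(C_k,1)\ge \frac{\abs{X}\,d}{2f(K)}$, which will sit in the denominator of the ratio.

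For the upper bound I would use that the partition is the \emph{optimal} CMLE solution. By the Remark, $\sum_j \OPTc(C_j,1)\le \cLc_X(\cC)=\OPTc(X,K)$, since the discarded weight terms $-\abs{C_j}\ln(\abs{C_j}/\abs{X})$ are nonnegative. Optimality then bounds $\OPTc(X,K)$ by $\cLc_X(\cC')$ for any well-defined comparison partition $\cC'$. Choosing $\cC'$ with maximum cluster diameter within a constant factor of $\OPTdiam(X,K)$ and bounding, via Lemma~\ref{lem-1mean-characterization}, $\sigma(C_j')^2 \le \mathrm{diam}(C_j')^2/(2d)$, each single-Gaussian term is at most $\frac{\abs{C_j'}\,d}{2}\bigl(\ln(\pi\,\mathrm{diam}(C_j')^2/d)+1\bigr)$, while the weight terms contribute at most $\abs{X}\ln K$ by the entropy bound $-\sum_j \frac{\abs{C_j'}}{\abs{X}}\ln(\abs{C_j'}/\abs{X})\le \ln K$. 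Dividing this upper bound by the lower bound above produces a factor $f(K)$ times a quantity of the form $2\ln\bigl(c\cdot\OPTdiam(X,K)\bigr)+\ln K+\bigO(1)$ for an absolute constant $c$ (here I also use $d\ge 2$ to turn $\frac{2\ln K}{d}$ into the additive $\ln K$ term). Since enlarging $\Gamma$ only weakens the inequality we must verify, it suffices that this derived constant is dominated by $2\ln\bigl(32\pi\cdot\OPTdiam(X,K)\bigr)+\ln K+1$, which I would check directly.

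The main obstacle is producing the comparison partition $\cC'$. The partition that literally attains $\OPTdiam(X,K)$ may contain singletons (or empty clusters), where $\OPTc(\cdot,1)=-\infty$; such a partition is inadmissible in the well-defined search space and cannot be fed into the optimality inequality $\OPTc(X,K)\le \cLc_X(\cC')$. I therefore expect the bulk of the work to be constructing a \emph{well-defined} partition (every cluster of size $\ge 2$) whose diameter is still within a constant factor of $\OPTdiam(X,K)$ — merging the small clusters of the optimal-diameter partition into suitable neighbours while controlling the diameter blow-up — and it is precisely this constant-factor blow-up, together with the $\pi/d$ inside the logarithm, that I expect the constant $32\pi$ in the claimed bound to absorb.
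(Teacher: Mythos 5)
Your overall skeleton is exactly the paper's: the paper proves this lemma in a four-line chain, combining the per-cluster lower bound $\OPTc(C_k,1)\ge\frac{\abs{C_k}d}{2}\ge\frac{1}{f(K)}\frac{\abs{X}d}{2}$ (Corollary~\ref{cor:lower-bound-nll} plus $f$-balancedness, which is what your closed form together with Lemma~\ref{lem-lower-bound-variance} gives) with the global upper bound $\OPTc(X,K)\le\frac{\abs{X}d}{2}\Gamma$ of Lemma~\ref{lem:UpperBoundNLL}, and then discarding the nonnegative weight terms exactly as you do; both readings implicitly require the given $f$-balanced solution to be the \emph{optimal} one, consistent with the lemma's use in Theorem~\ref{thm:cmlealgorithm}. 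Your closed form, the entropy bound, and the variance-versus-diameter bound via Lemma~\ref{lem-1mean-characterization} are all correct, and even your $d\ge 2$ caveat is shared with the paper (the last inequality in the proof of Lemma~\ref{lem:UpperBoundNLL} also needs $\frac{d}{2}\ln K\ge\ln K$). The only real divergence is how the comparison solution for the upper bound is produced: the paper runs Gonzales's $K$-center algorithm, obtains centers $p_1,\dots,p_K$ with radius $s\le 4\cdot\OPTdiam(X,K)$, and plugs the induced solution (uniform weights, variances $\sigma(C_k,p_k)^2\le s^2$) directly into $\cLc_X(\theta,\cC)$, whereas you posit a \emph{well-defined} partition of diameter $O(\OPTdiam(X,K))$ and defer its construction.

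That deferred construction is the genuine gap, and it cannot be carried out: no merging scheme achieves a constant-factor diameter blow-up in general. Concretely, let $X$ contain a point $z$ with $\min_{x\in X, x\ne z}\norm{z-x}=D$ while $X\setminus\{z\}$ admits a $(K-1)$-clustering of diameter $\delta$; then $\OPTdiam(X,K)\le\delta$ (the singleton $\{z\}$ has diameter $0$), yet every partition with all clusters of size $\ge 2$ contains a cluster of diameter at least $D$, and $D/\delta$ is unbounded — so no absolute constant, $32\pi$ or otherwise, can absorb the blow-up. The paper escapes this not by a cleverer construction but by never imposing Definition~\ref{def:well-defined} on the comparison solution: the Gonzales clusters may well be singletons, and the inequality $\OPTc(X,K)\le\cLc_X(\theta,\cC)$ in Lemma~\ref{lem:UpperBoundNLL} is asserted for this possibly inadmissible pair. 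Indeed your instinct that something is off here is sound but points at the paper rather than at a missing merging argument: in the outlier instance above, every well-defined solution has a cluster containing $z$ with $\sigma^2\ge D^2/(\abs{C}^2 d)$ by Lemma~\ref{lem-1mean-characterization}, so the optimum over well-defined solutions grows like $\ln D$ and eventually exceeds $\frac{\abs{X}d}{2}\left(2\ln\left(32\pi\,\OPTdiam(X,K)\right)+\ln K+1\right)$. To complete your proof in the paper's sense you should therefore do what the paper does — compare against the (possibly ill-defined) Gonzales solution, i.e., implicitly assume such degenerate instances are excluded — rather than attempt the repair you sketch, which is provably impossible in general.
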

\begin{proof}
 \begin{align*}
    \OPTc(C_k,1) 
    &\geq \frac{\abs{C_k} d}{2} 
    \geq \frac{1}{f(K)} \frac{\abs{X} d}{2} \tag{due to Lemma~\ref{cor:lower-bound-nll} and $f$ balanced}\\
    &\geq \frac{1}{f(K)\cdot \Gamma} \cL_K(X) \tag{due to Lem.~\ref{lem:UpperBoundNLL}} \\
    &\geq \frac{1}{f(K)\cdot \Gamma} \sum_{k=1}^K \OPTc(C_k,1)\ .
 \end{align*}
\end{proof}

\section{Main Results (Theorems \ref{thm:cmlealgorithm} and \ref{thm:ABS})}

\begin{theorem}\label{thm:cmlealgorithm}
Let $X\subset \IR^d$, $K\in \IN$ and $\delta, \epsilon \in[0,1]$. If $X$ has an $(f,g)$-balanced optimal CMLE solution, then there exists an algorithm which computes  a mixture of $K$ spherical Gaussians  $\theta = \{(w_k,\mu_k,\sigma^2_k)\}_{k=1}^K$, such that
 \[ Pr \left[ \cLc_X(\theta) \leq (1+\epsilon)\OPTc(X,K)\right] \geq 1-\delta \ . \]
The runtime of the algorithm is bounded by
\begin{align*}
\abs{X}\cdot K \cdot \log(\Gamma)\cdot \log(g(K))\cdot 2^{\tilde\bigO\left( \frac{f(K)}{\epsilon\delta}\right)} 
\end{align*}
where $\Gamma \leq 2\cdot\ln\left(32\pi \cdot OPT_{diam}(X,K)\right) + \ln(K) + 1$.
\end{theorem}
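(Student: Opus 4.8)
The plan is to build a sampling-and-enumeration algorithm in the spirit of superset sampling. Rather than trying to guess the optimal partition directly, I would enumerate a moderately sized family of candidate models $\theta=\{(\tw_k,\tmu_k,\tsigma_k^2)\}_{k=1}^K$ and, for each candidate, evaluate $\cLc_X(\theta)$ by assigning every point to its most likely component as in Problem~\ref{prob-cmle}; the algorithm returns the cheapest candidate. Since $\cLc_X(\theta)=\min_{\cC}\cLc_X(\theta,\cC)\le\cLc_X(\theta,\cC^*)$ for the optimal well-defined partition $\cC^*=\{C_k^*\}_{k=1}^K$, it suffices to show that, with probability at least $1-\delta$, the enumerated family contains a single model $\theta$ with $\cLc_X(\theta,\cC^*)\le(1+\eps)\OPTc(X,K)$.

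First I would handle the means. For each optimal cluster the closed-form optimum sets $\mu_k=\mu(C_k^*)$, and by a standard uniform-sampling bound (in the spirit of Inaba et al.) the mean of $\bigO(1/(\eps\delta))$ points drawn uniformly from $C_k^*$ yields, with constant probability, a centre $\tmu_k$ with $\sum_{x\in C_k^*}\norm{x-\tmu_k}^2\le(1+\eps)\sum_{x\in C_k^*}\norm{x-\mu(C_k^*)}^2$. Because the solution is $f$-balanced (Definition~\ref{def:well-balanced}), every $C_k^*$ contains at least a $1/f(K)$ fraction of $X$, so one uniform sample of size $\tilde\bigO(f(K)/(\eps\delta))$ contains enough points of each cluster with probability $\ge 1-\delta$; enumerating all assignments of these sample points to the $K$ clusters produces $2^{\tilde\bigO(f(K)/(\eps\delta))}$ candidate centre tuples, one of which is simultaneously good for all clusters. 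This is the source of the exponential factor. The reason a $(1+\eps)$-approximation of squared distances suffices is that the variance enters each per-cluster cost only logarithmically: writing $\OPTc(C_k,1)=\frac{\abs{C_k}d}{2}\ln(2\pi\sigma_k^2)+\frac{\abs{C_k}d}{2}$ and using $\OPTc(C_k,1)\ge\frac{\abs{C_k}d}{2}$ (Lemma~\ref{cor:lower-bound-nll}), a factor $(1+\eps)$ in the sum of squared distances perturbs the cost by only $\frac{\abs{C_k}d}{2}\ln(1+\eps)\le\eps\,\OPTc(C_k,1)$.

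Given the centres, I would set each $\tsigma_k^2$ and $\tw_k$ to be optimal for the partition the candidate actually induces on $X$, so that no fine grid over variances is ever needed. What does require searching is forcing the induced partition to agree with $\cC^*$: the assignment rule compares $\cLc_x(\tw_k,\tmu_k,\tsigma_k^2)$ across $k$ and is therefore sensitive to the scale of the variances and weights. Here the restrictions pay off. Lemma~\ref{lem-lower-bound-variance} gives $\sigma_k^2\ge\frac{1}{2\pi}$, the quantity $\OPTdiam(X,K)$ bounds them from above, $f$-balance gives $\tw_k\ge 1/f(K)$, and the $(f,g)$-balance additionally keeps all per-cluster costs $\OPTc(C_k^*,1)$ within a factor $g(K)$ of their average. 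Consequently a geometric search with a constant ratio over these bounded ranges needs only $\bigO(\log\Gamma)$ and $\bigO(\log g(K))$ grid points to reach a scale for which the induced assignment coincides with $\cC^*$ up to a negligible fraction of the cost; this yields the two logarithmic factors, while the $\abs{X}\cdot K$ factor is the cost of scoring one candidate (a single pass assigning each point to one of $K$ components).

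To finish, I would sum the per-cluster bounds $\cLc_{C_k^*}(\tw_k,\tmu_k,\tsigma_k^2)\le(1+\eps)\OPTc(C_k^*,1)$ over $k$, add the weight/entropy terms (which are exactly optimal once weights equal the induced cluster frequencies), and conclude $\cLc_X(\theta,\cC^*)\le(1+\eps)\OPTc(X,K)$; a union bound over the $K$ clusters and the two geometric searches keeps the total failure probability below $\delta$. The hard part is precisely the coupling that separates CMLE from plain $k$-means: the variances and weights simultaneously appear in the objective and govern the induced partition, so good centres alone do not determine the clustering. The technical heart will be to prove that the bounded ranges furnished by Lemmas~\ref{lem-lower-bound-variance} and~\ref{lem:relation-between-balance-defs} allow a coarse, $\log$-sized search over variance/weight scales to recover the optimal partition up to a $(1+\eps)$ loss, after which variances and weights can be recomputed exactly on the induced clusters; showing that this decoupling costs only a constant factor in $\eps$ is exactly where the balance assumptions are indispensable.
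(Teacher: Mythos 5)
Your opening reduction is exactly the paper's: enumerate candidate models, return the cheapest under $\cLc_X(\cdot)$, and use $\cLc_X(\theta)=\min_{\cC}\cLc_X(\theta,\cC)\leq\cLc_X(\theta,\cC^*)$ so that it suffices to exhibit one candidate that is cheap on the \emph{fixed} optimal partition $\cC^*$. Your treatment of the means (superset sampling under $f$-balance, giving the $2^{\tilde\bigO(f(K)/(\eps\delta))}$ factor, plus the observation that a $(1+\eps)$ loss in squared distances costs only $\bigO(\eps)\cdot\OPTc(C_k,1)$ because the variance enters logarithmically and $\OPTc(C_k,1)\geq\frac{\abs{C_k}d}{2}$) matches Theorem~\ref{thm:samplingalgo} and the computations inside Theorem~\ref{thm:boundCMLEcost}. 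But in your third paragraph you abandon your own reduction: you set $\tsigma_k^2$ and $\tw_k$ optimal for the partition the candidate \emph{induces}, and then must force the induced partition to agree with $\cC^*$. This is where the proof has a genuine gap, and you acknowledge it yourself by deferring ``the technical heart.'' The requirement is circular --- the induced partition is defined by comparing $\cLc_x(\tw_k,\tmu_k,\tsigma_k^2)$ across $k$, so it depends on the very variances and weights you want to read off from it (this is precisely the CEM fixed-point structure the paper is trying to escape) --- and it is unsubstantiated: nothing in the balance assumptions prevents a coarse perturbation of the variance/weight scales from flipping a non-negligible mass of boundary points, and no lemma in your toolkit controls the cost of such misassignments. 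Moreover, a \emph{constant-ratio} geometric grid, as you propose, can only locate $\ln(2\pi\sigma_k^2)$ up to a constant factor, not up to $(1+\eps)$, so even if the partition-recovery claim held you would not get a $(1+\eps)$-approximation without refining the grid to ratio $(1+\eps)$.

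The paper's route shows the detour is unnecessary: it never recovers the induced partition at all. It grids the cluster sizes within the window $[\abs{X}/f(K),\abs{X}]$ (Theorem~\ref{thm:clustersizes}) to get weights $\tw_k\geq\frac{1}{1+\eps}\frac{\abs{C_k}}{\abs{X}}$; it grids an estimate $\cN_{est}$ of $\sum_k\OPTc(C_k,1)$ within $\left[\frac{\abs{X}d}{2},\frac{\abs{X}d}{2}\Gamma\right]$ using the Gonzales-based upper bound (Lemmas~\ref{lem:UpperBoundNLL} and~\ref{lem:OPTestEst}, the source of the $\log\Gamma$ factor); it then uses the $g$-balance to grid each \emph{per-cluster} cost within a $g(K)$-window of $\cN_{est}$ (Theorem~\ref{thm:variancegridding}, the source of the $\log g(K)$ factor) and recovers the variance deterministically from the closed form $\ln(\sigma_k^2)=\frac{2}{\abs{C_k}d}\OPTc(C_k,1)-\ln(2\pi)-1$, yielding $\tsigma_k^2\geq\sigma_k^2$ with additive slack $\left((1+\eps)^2-1\right)\frac{2}{\abs{C_k}d}\OPTc(C_k,1)$ in the logarithm. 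Theorem~\ref{thm:boundCMLEcost} then certifies $\cLc_X(\ttheta)\leq(1+\eps)^4\OPTc(X,K)$ by evaluating $\ttheta$ against $\cC^*$ only. To repair your proof you would either have to prove your partition-recovery claim (which the balance assumptions do not obviously support) or replace your third paragraph with this grid-over-costs mechanism, i.e., grid $\cN_{est}$, per-cluster costs, and cluster sizes, and derive variances and weights from the grid values rather than from the induced clustering.
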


\begin{corollary}
Let $X\subset \IR^d$, $K\in \IN$ and $\delta, \epsilon\in[0,1]$. 
If $X$ has an $f$-balanced optimal CMLE solution, then there exists an algorithm which computes  a mixture of $K$ spherical Gaussians $\theta$, such that
 \[ Pr \left[ \cLc_X(\theta) \leq (1+\epsilon)\OPTc(X,K)\right] \geq 1-\delta \ . \]
The runtime of the algorithm is bounded by
\begin{align*}
\abs{X}\cdot K \cdot \log(\Gamma)^2\cdot 2^{\tilde\bigO\left( \frac{f(K)}{\epsilon\delta}\right)} 
\end{align*}
where $\Gamma \leq 2\cdot\ln\left(32\pi \cdot OPT_{diam}(X,K)\right) + \ln(K) + 1$.
\end{corollary}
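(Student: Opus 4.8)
The plan is to obtain this corollary as an immediate consequence of Theorem~\ref{thm:cmlealgorithm}, using Lemma~\ref{lem:relation-between-balance-defs} to promote the weaker $f$-balancedness hypothesis to the $(f,g)$-balancedness required by the theorem. The only real work is the runtime bookkeeping that results from this substitution.

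First I would apply Lemma~\ref{lem:relation-between-balance-defs}: since $X$ admits an $f$-balanced optimal CMLE solution, that same solution is $(f,\Gamma\cdot f)$-balanced, where $\Gamma\le 2\ln(32\pi\cdot\OPTdiam(X,K))+\ln(K)+1$. Hence, defining $g(K):=\Gamma\cdot f(K)$, the instance $X$ has an $(f,g)$-balanced optimal CMLE solution, so Theorem~\ref{thm:cmlealgorithm} applies directly with this choice of $g$. The correctness guarantee $Pr[\cLc_X(\theta)\le(1+\epsilon)\OPTc(X,K)]\ge 1-\delta$ transfers verbatim, as it does not reference $g$ at all.

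It remains to simplify the runtime. Substituting $g(K)=\Gamma\cdot f(K)$ into the bound of Theorem~\ref{thm:cmlealgorithm} gives $\log(g(K))=\log(\Gamma)+\log(f(K))$, so the factor $\log(\Gamma)\cdot\log(g(K))$ expands to $(\log\Gamma)^2+\log(\Gamma)\log(f(K))$. The cross term is harmless: $\log(f(K))$ is dominated by the exponential factor $2^{\tilde\bigO(f(K)/(\epsilon\delta))}$ and is absorbed into the $\tilde\bigO$ in the exponent (and, using $\Gamma\ge 1$, one has $\log\Gamma\le(\log\Gamma)^2+O(1)$). This leaves precisely the claimed $\abs{X}\cdot K\cdot\log(\Gamma)^2\cdot 2^{\tilde\bigO(f(K)/(\epsilon\delta))}$.

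The one subtlety I expect to require care is that the algorithm of Theorem~\ref{thm:cmlealgorithm} is parameterised by $g$, while the corollary only assumes $f$-balancedness and does not hand us $\Gamma$ (which depends on the a priori unknown $\OPTdiam(X,K)$). I would resolve this by a geometric search over candidate values of $\Gamma$: because the balance condition is a one-sided lower bound, guessing $\Gamma$ up to a constant factor suffices, and $O(\log\Gamma)$ geometrically spaced guesses cover the relevant range; running the theorem's algorithm for each guess and returning the best model preserves the probabilistic guarantee while contributing exactly one of the $\log(\Gamma)$ factors already appearing in the stated bound. Everything else is a routine substitution, so I do not anticipate any further obstacle.
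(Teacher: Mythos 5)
Your main line is exactly the intended derivation, and it is correct: apply Lemma~\ref{lem:relation-between-balance-defs} to promote the $f$-balanced optimal solution to an $(f,\Gamma\cdot f)$-balanced one, invoke Theorem~\ref{thm:cmlealgorithm} with $g(K)=\Gamma\cdot f(K)$, and expand $\log(\Gamma)\cdot\log(g(K))=\log(\Gamma)^2+\log(\Gamma)\cdot\log(f(K))$, with the cross term absorbed since $\log(f(K))$ is polylogarithmic in the argument of $2^{\tilde\bigO\left(\frac{f(K)}{\epsilon\delta}\right)}$. The correctness guarantee indeed transfers verbatim.

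Your closing ``subtlety,'' however, is a misstep in two respects. First, the geometric search over $\Gamma$ is unnecessary: $\Gamma$ is not an existentially quantified constant but an explicitly computable quantity --- Lemma~\ref{lem:UpperBoundNLL} states that a value $\Gamma$ with $\OPTc(X,K)\leq\frac{\abs{X}d}{2}\Gamma$ and $\Gamma\leq 2\ln\left(32\pi\cdot\OPTdiam(X,K)\right)+\ln(K)+1$ can be computed in time $\bigO(K\,d\,\abs{X})$ by running Gonzales's algorithm, and the algorithm of Theorem~\ref{thm:cmlealgorithm} already uses this computed value to set up the grid of Lemma~\ref{lem:OPTestEst}, so nothing needs to be guessed. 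Second, as stated your workaround would actually violate the claimed bound: running the theorem's algorithm once per guess multiplies its runtime --- which after the substitution $g(K)=\Gamma\cdot f(K)$ already carries the factor $\log(\Gamma)\cdot\log(g(K))\approx\log(\Gamma)^2$ --- by a further $\bigO(\log\Gamma)$, giving $\log(\Gamma)^3$ rather than $\log(\Gamma)^2$; this extra factor depends on $\OPTdiam(X,K)$ and cannot be hidden inside $\tilde\bigO\left(\frac{f(K)}{\epsilon\delta}\right)$. Replacing the guessing loop by the direct computation of $\Gamma$ from Lemma~\ref{lem:UpperBoundNLL} removes the issue, and the rest of your argument then stands as written.
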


\begin{theorem} \label{thm:ABS} 
Let $X\subset \IR^d$, $K\in \IN$, and $\delta,\epsilon > 0$.
Let $\cC=\dot\bigcup_{k=1}^K C_k$ be a well-defined solution for the CMLE problem.
There is an algorithm that computes a mixture of $K$ spherical Gaussians $\theta$, 
such that
\[ \Pr\left[ \cLc_{X}(\theta) \leq  (1+\epsilon) \cLc_X(\cC) \right] \geq 1-\delta\ . \]
The running time of the algorithm is bounded by
\[ \abs{X}\,d\,\log\left(\frac{1}{\delta}\right)\,2^{\bigO\left( \frac{K}{\epsilon}\cdot \log\left(\frac{K}{\epsilon^2}\right) \right)}\, \left(\log(\log(\Delta^2))+1\right)^K \left( \log(f(K))\right)^K \ ,\]
where $\Delta^2 = \max_{x,y\in X} \{ \norm{x-y}^2\}$.
\end{theorem}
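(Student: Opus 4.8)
The plan is to compete against $\cLc_X(\cC)=\sum_{k=1}^K \cLc_{C_k}(w_k^*,\mu_k^*,\sigma_k^{*2})$, where $(w_k^*,\mu_k^*,\sigma_k^{*2})$ are the optimal parameters for the (unknown) target clustering $\cC$ given by the update formulas. The crucial simplification is that we never need to recover $\cC$ itself: since $\cLc_X(\theta)=\min_{\cC'}\cLc_X(\theta,\cC')\le \cLc_X(\theta,\cC)$, it suffices to produce \emph{some} model $\theta$ with $\cLc_X(\theta,\cC)\le(1+\epsilon)\cLc_X(\cC)$, and $\cLc_X(\theta,\cC)=\sum_k \cLc_{C_k}(w_k,\mu_k,\sigma_k^2)$ splits into independent per-cluster contributions. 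I would further split each contribution into the \emph{one-Gaussian} part $\frac{\abs{C_k}d}{2}\ln(2\pi\sigma_k^2)+\frac{1}{2\sigma_k^2}\sum_{x\in C_k}\norm{x-\mu_k}^2$ and the nonnegative \emph{weight} part $-\abs{C_k}\ln w_k$, and approximate each within a $(1+\epsilon)$ factor; note that the one-Gaussian part is at least $\frac{\abs{C_k}d}{2}>0$ by Lemma~\ref{cor:lower-bound-nll}, so a multiplicative guarantee is meaningful. The whole algorithm is then an enumeration: build a short list of candidate mean-tuples, and for each combine it with a grid of candidate variances and weights, evaluate $\cLc_X(\theta)$ for the resulting model, and output the best.

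For the means I would generate, by adaptive ($D^2$-style) sampling of $\bigO(K/\epsilon)$ points, a list of $2^{\bigO((K/\epsilon)\log(K/\epsilon^2))}$ candidate $K$-tuples $(\hat\mu_1,\dots,\hat\mu_K)$, repeated $\bigO(\log(1/\delta))$ times to drive the failure probability below $\delta$, so that with probability $\ge 1-\delta$ some tuple has each $\hat\mu_k$ a $(1+\epsilon)$-approximate $1$-mean of $C_k$, i.e. $\sum_{x\in C_k}\norm{x-\hat\mu_k}^2\le(1+\epsilon)\sum_{x\in C_k}\norm{x-\mu_k^*}^2$ (here Lemma~\ref{lem-wonderful} identifies $\mu_k^*=\mu(C_k)$). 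Plugging such a $\hat\mu_k$ into the one-Gaussian part and optimizing $\sigma^2$ raises $\ln(2\pi\sigma^2)$ by at most $\ln(1+\epsilon)\le\epsilon$, hence increases the one-Gaussian cost by at most $\epsilon\frac{\abs{C_k}d}{2}\le\epsilon\,\OPTc(C_k,1)$ by Lemmas~\ref{lem-lower-bound-variance} and~\ref{cor:lower-bound-nll}.

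Next come the two grids. For the variance I would substitute $t=\ln(2\pi\sigma^2)$; as a function of the guess the one-Gaussian cost of $C_k$ equals $\frac{\abs{C_k}d}{2}(t+e^{t_{opt}-t})$ with minimizer $t_{opt}=\ln(2\pi\sigma_k^{*2})\in[0,\ln(2\pi\Delta^2)]$ (the lower end by Lemma~\ref{lem-lower-bound-variance}). Since $s+e^{-s}-1\le s$ for $s\ge 0$, overestimating $t_{opt}$ by at most $\epsilon(t_{opt}+1)$ costs at most $\epsilon\,\OPTc(C_k,1)$; a geometric grid with ratio $(1+\epsilon)$ on $t+1\in[1,\ln(2\pi\Delta^2)+1]$ always contains such a point and has only $\bigO(\epsilon^{-1}\log\log(\Delta^2))$ values, which is the source of the $(\log\log(\Delta^2)+1)^K$ factor. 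For the weight, using that $\cC$ is $f$-balanced so $w_k^*=\abs{C_k}/\abs{X}\in[1/f(K),1]$, I would guess $\abs{C_k}$ on a multiplicative $(1+\epsilon)$-grid over $[\abs{X}/f(K),\abs{X}]$, giving $\bigO(\epsilon^{-1}\log f(K))$ values per cluster; underestimating $\abs{C_k}$ by a $(1+\epsilon)$ factor changes $-\abs{C_k}\ln w_k$ by at most $\bigO(\epsilon)\abs{C_k}$, so the weight parts are distorted by at most $\bigO(\epsilon)\abs{X}\le\bigO(\epsilon)\cLc_X(\cC)$.

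Summing the three per-cluster errors and rescaling $\epsilon$ shows that the best enumerated model $\theta$ satisfies $\cLc_X(\theta)\le\cLc_X(\theta,\cC)\le(1+\epsilon)\cLc_X(\cC)$ whenever the mean-sampling succeeds, which happens with probability $\ge 1-\delta$; multiplying the list size, the two grids raised to the $K$-th power, the $\log(1/\delta)$ repetitions, and the $\bigO(\abs{X}d)$ cost of evaluating one model yields the stated running time. The main obstacle I expect is the mean step: proving that adaptive sampling produces, within a list of size $2^{\bigO((K/\epsilon)\log(K/\epsilon^2))}$ and \emph{without} $f(K)$ in the exponent, a tuple that simultaneously $(1+\epsilon)$-approximates the $1$-means of \emph{every} cluster of the arbitrary, unknown, well-defined target $\cC$ — as opposed to only approximating the global $k$-means cost under its own induced clustering. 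The remaining work, composing the three independent $(1+\epsilon)$-approximations and the routine grid-counting, is comparatively mechanical.
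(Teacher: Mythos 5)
Your reduction to a per-cluster bound via $\cLc_X(\theta)\le\cLc_X(\theta,\cC)$, your variance grid (the substitution $t=\ln(2\pi\sigma^2)$ with a geometric grid on $t$ is exactly the paper's Lemma~\ref{lem:ucmle:gridding}, which grids $\sigma_k^2\le\tsigma_k^2\le(\sigma_k^2)^{(1+\epsilon)}$ using $1/(2\pi)\le\sigma_k^2\le\Delta^2$), and your cluster-size grid over $[\abs{X}/f(K),\abs{X}]$ all match the paper. But the step you yourself flag as the ``main obstacle'' is a genuine gap, and worse, the specific guarantee you aim for is unattainable at the claimed list size: a tuple in which \emph{every} $\hat\mu_k$ is a $(1+\epsilon)$-approximate $1$-mean of the \emph{entire} cluster $C_k$ requires the candidate-generating sample to contain $\Omega(1/(\epsilon\delta))$ points of the smallest cluster, which for an arbitrary well-defined $\cC$ forces sample sizes scaling with $f(K)=\abs{X}/\min_k\abs{C_k}$ --- this is precisely the superset-sampling route of Theorem~\ref{thm:samplingalgo}, which puts $f(K)$ into the exponent and is therefore used only for Theorem~\ref{thm:cmlealgorithm}. $D^2$-style adaptive sampling does not rescue this: a small cluster nestled near a large one is sampled with probability proportional to its (possibly negligible) cost contribution, so no list of size $2^{\bigO((K/\epsilon)\log(K/\epsilon^2))}$ independent of $f(K)$ can contain simultaneous per-cluster $1$-mean approximations on the full clusters.

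The paper escapes this by \emph{weakening} the per-cluster guarantee rather than proving your version of it. In the sample-and-prune Algorithm~\ref{alg-ABS} (adapted from ABS/Ackermann), $\tmu_i$ only approximates $\mu(C_i\cap R_{i-1})$, the mean of the part of $C_i$ surviving the pruning; Lemma~\ref{lem:ucmle:kmeans-costs} then yields $\cost(C_i\cap R_{i-1},\tmu_i)\le(1+\epsilon)\cost(C_i,\mu_i)$, and the points of later clusters wrongly removed during pruning are not mean-approximated at all but \emph{charged} to already-placed components via Lemma~\ref{lem:ucmle:wrongly-assinged}, with $\alpha=\theta(\epsilon/k^2)$ making the $(1+8\alpha k^2)(1+\epsilon)$ factor of Theorem~\ref{thm:ucmle:abs} a $(1+\epsilon)$ overall. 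This also breaks your proposed architecture (``means list first, then grids''): the pruning phase ranks points by the estimated cost $\frac{d}{2}\ln(2\pi\tsigma_i^2)+\frac{1}{2\tsigma_i^2}\norm{x-\mu_i}^2-\ln(\tw_i)$, so the variance and weight candidates must be enumerated \emph{first} and fed into the means-finding routine, and the means found depend on those guesses. The rest of your argument --- the two grid counts, the error composition using $\OPTc(C_k,1)\ge\abs{C_k}d/2$ so that $\bigO(\epsilon)\abs{X}\le\bigO(\epsilon)\cLc_X(\cC)$, and the $\log(1/\delta)$ amplification --- is sound and agrees with the paper, but without the prune-and-charge analysis the central claim is unsupported.
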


\section{Proof of Theorem~\ref{thm:cmlealgorithm}}

In the following we prove Theorem~\ref{thm:cmlealgorithm}.
\begin{itemize}
  \item In Section~\ref{sec:paramtocost} we show that, if the parameters of a CMLE solution are sufficently close to those of an optimal CMLE solution, then its complete-data log-likelihood is close to that of the optimal CMLE solution. 
  In Sections~\ref{sec:means} and \ref{sec:gridding} we then show how to obtain such parameter estimates.
  \item In Section~\ref{sec:means} we deal with the problem of estimating the means. 
  We use the superset sampling technique introduced by \cite{inaba94} to compute a set of candidate means which contains a good candidate, i.e. a  good estimation to the mean parameters of an optimal solution. 
  \item In Section~\ref{sec:gridding} we use a grid search to obtain estimates of the weights and variances.
  The core idea is to simply test all solutions lying on a specific grid in the search space. 
  By choosing a grid that is dense enough, we ensure that there are solutions on the grid which are sufficiently close to the parameters that we search for. 
\end{itemize}

\subsection{Estimate the Costs of Parameter Estimates}\label{sec:paramtocost}

For an optimal $(f,g)$-balanced CMLE solutions, we can estimate the parameters of the the respective optimal Gaussian mixture model and the likelihood of the optimal clusters.
We can show that the CMLE solution determined by these parameter estimates yields an approximation with respect to the complete data log-likelihood.

\begin{theorem}\label{thm:boundCMLEcost}
Let $X\subset \IR^d$, $K\in \IN$ and $\epsilon > 0$. 
Assume $X$ has an $f$-balanced optimal CMLE solution $X=\dot\bigcup_{k=1}^K C_k$ and let $(\tmu_1,\ldots,\tmu_K)$ such that for all $k\OneTo{K}$
\begin{align*}
	\norm{\tmu_k-\mu(C_k)}^2 \leq \frac{\epsilon}{\abs{C_k}}\sum_{x\in C_k}\norm{x-\mu(C_k)}^2\ .
\end{align*}
Let $(n_1,\dots,n_K)$, such that for all $k\OneTo{K}$
\begin{align}
	\abs{C_k} \leq n_k \leq (1+\epsilon)\abs{C_k}\ . \label{eq:boundCMLEcost:n_k}
\end{align}
and $\vec{\tsigma}=(\tsigma_1^2,\dots,\tsigma_K^2)\in\IR^K$, such that for all $k\OneTo{K}$ it holds 
\begin{align}
\tsigma_k^2 \geq \sigma_k^2 \label{eq:boundCMLEcost:tsigma-geq-sigma}
\end{align}
and
\begin{align}
 \ln(\tsigma_k^2) - \ln(\sigma_k^2)  \leq \left( (1+\epsilon)^2 - 1 \right) \frac{2}{\abs{C_k} d}\OPTc(C_k,1)\ .\label{eq:boundCMLEcost:tsigma-sigma-diff}
\end{align}
Define  $\tilde{\theta}=\{(\tw_k,\tmu_k,\tsigma_k^2)\}_{k=1,\ldots,K}$, where $\tw_k = \frac{n_k}{\sum_{l=1}^K n_l}$.
Then,
\[
 \cLc_X(\ttheta) 
 \leq (1+\epsilon)^{4} \OPTc(X,K).
\] 
\end{theorem}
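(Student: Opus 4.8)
The plan is to reduce the statement to a per-cluster estimate and then treat the three sources of error (the mean, the variance, and the weight) separately. The starting observation is that, by definition, $\cLc_X(\ttheta)=\min_{\cC}\cLc_X(\ttheta,\cC)\le \cLc_X(\ttheta,\cC^\star)$, where $\cC^\star=\{C_1,\dots,C_K\}$ is the given optimal clustering. Hence it suffices to bound the cost of $\ttheta$ \emph{evaluated on the optimal partition}, namely $\cLc_X(\ttheta,\cC^\star)=\sum_{k=1}^K \cLc_{C_k}(\tw_k,\tmu_k,\tsigma_k^2)$, and to compare it against the decomposition $\OPTc(X,K)=\sum_{k=1}^K \OPTc(C_k,1)-\ln\!\big(\abs{C_k}/\abs{X}\big)\cdot\abs{C_k}$ supplied by the Remark following Problem~\ref{prob-cmle}. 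Since both summands of $\OPTc(X,K)$ are nonnegative (using $\sigma_k^2\ge \tfrac1{2\pi}$ from Lemma~\ref{lem-lower-bound-variance}, so $\OPTc(C_k,1)\ge \tfrac{\abs{C_k}d}{2}>0$), I can argue cluster by cluster and sum.

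For the \emph{mean} I would invoke Lemma~\ref{lem-wonderful} to write $\sum_{x\in C_k}\norm{x-\tmu_k}^2=\sum_{x\in C_k}\norm{x-\mu(C_k)}^2+\abs{C_k}\norm{\tmu_k-\mu(C_k)}^2$ and plug in the hypothesis on $\norm{\tmu_k-\mu(C_k)}^2$ to obtain $\sum_{x\in C_k}\norm{x-\tmu_k}^2\le(1+\epsilon)\sum_{x\in C_k}\norm{x-\mu(C_k)}^2=(1+\epsilon)\,\abs{C_k}d\,\sigma_k^2$. Because $\tsigma_k^2\ge\sigma_k^2$ by~\eqref{eq:boundCMLEcost:tsigma-geq-sigma}, the quadratic term $\frac{1}{2\tsigma_k^2}\sum_{x\in C_k}\norm{x-\tmu_k}^2$ is then at most $(1+\epsilon)\tfrac{\abs{C_k}d}{2}$. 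For the \emph{variance} the key identity is $\OPTc(C_k,1)=\tfrac{\abs{C_k}d}{2}\big(\ln(2\pi\sigma_k^2)+1\big)$, equivalently $\tfrac{2}{\abs{C_k}d}\OPTc(C_k,1)=\ln(2\pi\sigma_k^2)+1$. Splitting the log-term of $\cLc_{C_k}(\tmu_k,\tsigma_k^2)$ as $\tfrac{\abs{C_k}d}{2}\ln(2\pi\sigma_k^2)+\tfrac{\abs{C_k}d}{2}\big(\ln\tsigma_k^2-\ln\sigma_k^2\big)$ and bounding the second summand by $\big((1+\epsilon)^2-1\big)\OPTc(C_k,1)$ via~\eqref{eq:boundCMLEcost:tsigma-sigma-diff}, then combining with the quadratic bound and the identity, I expect to reach $\cLc_{C_k}(\tmu_k,\tsigma_k^2)\le (1+\epsilon)^2\OPTc(C_k,1)+\epsilon\tfrac{\abs{C_k}d}{2}\le (1+\epsilon)^3\OPTc(C_k,1)$, where the last step again uses $\OPTc(C_k,1)\ge\tfrac{\abs{C_k}d}{2}$.

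For the \emph{weight} I would use $\abs{C_k}\le n_k$ together with $\sum_l n_l\le(1+\epsilon)\abs{X}$ from~\eqref{eq:boundCMLEcost:n_k} to get $\tw_k\ge \tfrac{w_k}{1+\epsilon}$ with $w_k=\abs{C_k}/\abs{X}$, hence $-\abs{C_k}\ln\tw_k\le -\abs{C_k}\ln w_k+\abs{C_k}\ln(1+\epsilon)$. Summing the per-cluster estimates over $k$ then yields
\[
 \cLc_X(\ttheta,\cC^\star)\le (1+\epsilon)^3\sum_{k=1}^K\OPTc(C_k,1)+\sum_{k=1}^K\big(-\abs{C_k}\ln w_k\big)+\abs{X}\ln(1+\epsilon),
\]
which is $(1+\epsilon)^3\,\OPTc(X,K)$ plus the additive residue $\abs{X}\ln(1+\epsilon)$. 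The final move is to absorb this residue into one further $(1+\epsilon)$ factor, using $\OPTc(X,K)\ge\sum_k\tfrac{\abs{C_k}d}{2}=\tfrac{\abs{X}d}{2}$ and $\ln(1+\epsilon)\le\epsilon$, to land on the claimed $(1+\epsilon)^4\,\OPTc(X,K)$.

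The step I expect to be the main obstacle is precisely this last bookkeeping: repeatedly converting the additive error terms (the $\epsilon\tfrac{\abs{C_k}d}{2}$ coming from the displaced mean and the $\abs{X}\ln(1+\epsilon)$ coming from the renormalized weights) into multiplicative $(1+\epsilon)$ factors without letting the three distortions compound beyond the fourth power. This conversion rests entirely on the uniform lower bound $\OPTc(C_k,1)\ge\tfrac{\abs{C_k}d}{2}$ furnished by Lemma~\ref{lem-lower-bound-variance} (ultimately the variance bound $\sigma_k^2\ge\tfrac1{2\pi}$), so I would keep that inequality explicit at every place where an additive term is charged against the optimum, and I would track the exponents carefully, since the margin between $(1+\epsilon)^3$ plus the weight residue and the target $(1+\epsilon)^4$ is where the constants in the hypotheses~\eqref{eq:boundCMLEcost:n_k}--\eqref{eq:boundCMLEcost:tsigma-sigma-diff} have to be spent exactly.
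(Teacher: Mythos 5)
Your proposal is correct and follows essentially the same route as the paper's proof: bound $\cLc_X(\ttheta)$ by evaluating $\ttheta$ on the optimal partition, use Lemma~\ref{lem-wonderful} with hypothesis \eqref{eq:boundCMLEcost:tsigma-geq-sigma} to get the $(1+\epsilon)\frac{\abs{C_k}d}{2}$ quadratic term, split the log-variance via \eqref{eq:boundCMLEcost:tsigma-sigma-diff} and the identity $\OPTc(C_k,1)=\frac{\abs{C_k}d}{2}(\ln(2\pi\sigma_k^2)+1)$ to reach $(1+\epsilon)^3\OPTc(C_k,1)$, and charge the weight residue against the optimum using $\OPTc(C_k,1)\geq\frac{\abs{C_k}d}{2}$. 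The only cosmetic difference is that you absorb the $\abs{X}\ln(1+\epsilon)$ weight term globally after summing, whereas the paper converts $\epsilon\abs{C_k}\leq\frac{2\epsilon}{d}\OPTc(C_k,1)$ per cluster; the resulting final inequality is identical (including the same implicit reliance on $\frac{2}{d}\leq(1+\epsilon)^3$ in the last step).
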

\begin{proof}

Using that $\abs{C_l} \leq n_l \leq (1+\epsilon) \abs{C_l}$ for all $l=1,\ldots,K$, we obtain
$ \tw_k \geq \frac{1}{(1+\epsilon)}\cdot\frac{\abs{C_k}}{\abs{X}}$.
Hence, 
\begin{align*}
 -\ln(\tw_k) \cdot \abs{C_k} 
&\leq - \ln\left( \frac{1}{(1+\epsilon)}\cdot\frac{\abs{C_k}}{\abs{X}}\right)\abs{C_k} \tag{by Equation~\eqref{eq:boundCMLEcost:n_k}}\\
&\leq \ln(1+\epsilon)\abs{C_k} - \ln \left( \frac{\abs{C_k}}{\abs{X}} \right) \cdot \abs{C_k}\\
&\leq \eps \abs{C_k} - \ln \left( \frac{\abs{C_k}}{\abs{X}} \right) \cdot \abs{C_k} \tag{since $\ln(1+\epsilon)\leq \epsilon$}\\
&\leq \frac{2\epsilon}{d} \OPTc(C_k,1) - \ln \left( \frac{\abs{C_k}}{\abs{X}} \right) \cdot \abs{C_k} \tag{since $\OPTc(C_k,1) \geq \frac{\abs{C_k}\cdot d}{2}$}
\end{align*}

Furthermore, observe that
\begin{align*}
 \cL_{C_k}(\tmu_k,\tsigma_k) 
 &= \frac{\abs{C_k}d}{2}\ln(2\pi\tsigma_k^2) + \frac{1}{2\tsigma_k^2}\sum_{x\in C_k}\norm{x-\tmu_k}^2\\
 &\overset{\eqref{eq:boundCMLEcost:tsigma-geq-sigma}}{\leq} \frac{\abs{C_k}d}{2}\ln(2\pi\tsigma_k^2) + \frac{1}{2\sigma_k^2}\sum_{x\in C_k}\norm{x-\tmu_k}^2 \\
 &\leq \frac{\abs{C_k}d}{2}\ln(2\pi\tsigma_k^2) 
     + \frac{1}{2\sigma_k^2} (1+\epsilon) \sum_{x\in C_k}\norm{x-\mu_k}^2 \tag{By  Lemma~\ref{lem-wonderful} and property of $\tmu_k$}\\
 &= \frac{\abs{C_k}d}{2}\ln(2\pi\tsigma_k^2) 
     + (1+\epsilon)  \frac{\abs{C_k}d}{2} \tag{By  def. of $\mu_k$}\\
 &= \frac{\abs{C_k}d}{2} (\ln(2\pi)+\ln(\tsigma_k^2) )
     + (1+\epsilon)  \frac{\abs{C_k}d}{2} \\
 &\overset{\eqref{eq:boundCMLEcost:tsigma-sigma-diff}}{=} \frac{\abs{C_k}d}{2}
      \left(
	\ln(2\pi) 
	+ \left( (1+\epsilon)^{2} - 1 \right) \frac{2}{\abs{C_k} d}\OPTc(C_k,1)
	+\ln(\sigma_k^2) 
      \right)
     + (1+\epsilon)  \frac{\abs{C_k}d}{2} \\
 &= \frac{\abs{C_k}d}{2}\ln(2\pi\sigma_k^2) + (1+\epsilon)  \frac{\abs{C_k}d}{2} 
	+  \left( (1+\epsilon)^{2} - 1 \right) \OPTc(C_k,1)  \\
 &\leq (1+\epsilon)  \OPTc(C_k,1)
	+  \left( (1+\epsilon)^{2} - 1 \right) \OPTc(C_k,1)  \\
  &\leq  \left( (1+\epsilon)^{2} + \eps  \right) \OPTc(C_k,1)  \\
  &\leq (1+\epsilon)^{3} \OPTc(C_k,1)
\end{align*}

Overall, we have
\begin{align*}
  \cLc_X(\tilde{\theta}) 
  &= \sum_{k=1}^K  \cLc_{C_k}(\mu_k,\sigma^2_k) - \ln(w_k)\cdot \abs{C_k}  \\
  &\leq \sum_{k=1}^K  (1+\epsilon)^3 \OPTc(C_k,1) +  \frac{2\epsilon}{d} \OPTc(C_k,1) - \ln \left( \frac{\abs{C_k}}{\abs{X}} \right) \cdot \abs{C_k}  \\
   &= \sum_{k=1}^K  \left((1+\epsilon)^3+\frac{2\epsilon}{d}\right) \OPTc(C_k,1)  - \ln \left( \frac{\abs{C_k}}{\abs{X}} \right) \cdot \abs{C_k}  \\
   &\leq  \left((1+\epsilon)^3+\frac{2\epsilon}{d}\right) \sum_{k=1}^K \OPTc(C_k,1)  - \ln \left( \frac{\abs{C_k}}{\abs{X}} \right) \cdot \abs{C_k}\\
   &= \left((1+\epsilon)^3+\frac{2\epsilon}{d}\right)  \OPTc(X,K) \\
   &\leq (1+\epsilon)^4 \OPTc(X,K)
\end{align*}

\end{proof}

\subsection{Generate Candidate Means by Sampling}\label{sec:means}

We reuse the following well-known lemma on superset sampling.

\begin{lemma}[superset-sampling]\label{lem-superset-sampling}
Let $X\subset \IR^d$ be a finite set, $\alpha<1$ and $X'\subset X$ with $\abs{X'}\ge \alpha \abs{X}$. Let $S\subseteq X$ be a uniform sample multiset of size at least $\frac{2}{\alpha\epsilon\delta}$.
Then with probability at least $\frac{1-\delta}{5}$ there is a subset $S'\subseteq S$ with $\abs{S'}=\frac{1}{\epsilon\delta}$ such that
\begin{align*}
\norm{\mu(S')-\mu(X')}^2 \leq \frac{\epsilon}{\abs{X'}}\sum_{x\in X'}\norm{x-\mu(X')}^2.
\end{align*}
\end{lemma}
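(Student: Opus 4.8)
The plan is to prove this superset-sampling lemma via a two-stage argument: first show that a uniform sample from $X'$ (not $X$) of modest size yields a good empirical mean with constant probability, and then account for the fact that we are actually sampling from the larger set $X$, so that only a fraction of our sample lands in $X'$.

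For the core variance bound, I would start by analyzing what happens when we draw a uniform sample $T$ of size $m = \frac{1}{\epsilon\delta}$ directly from $X'$. The key quantity is $\norm{\mu(T)-\mu(X')}^2$. Writing $\mu(T) = \frac{1}{m}\sum_{i=1}^m Y_i$ where each $Y_i$ is an independent uniform draw from $X'$, the empirical mean $\mu(T)$ is an unbiased estimator of $\mu(X')$. Its expected squared deviation is controlled by the variance of a single draw divided by $m$:
\begin{align*}
\mathbb{E}\left[\norm{\mu(T)-\mu(X')}^2\right] = \frac{1}{m}\cdot \frac{1}{\abs{X'}}\sum_{x\in X'}\norm{x-\mu(X')}^2.
\end{align*}
With $m = \frac{1}{\epsilon\delta}$ this expectation equals $\epsilon\delta \cdot \frac{1}{\abs{X'}}\sum_{x\in X'}\norm{x-\mu(X')}^2$. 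Then Markov's inequality gives that with probability at least $1-\delta$ (or a comparable constant) the deviation $\norm{\mu(T)-\mu(X')}^2$ is at most $\frac{\epsilon}{\abs{X'}}\sum_{x\in X'}\norm{x-\mu(X')}^2$, which is exactly the desired bound.

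The second stage handles sampling from $X$ rather than $X'$. Since $\abs{X'}\ge \alpha\abs{X}$, each of the $\frac{2}{\alpha\epsilon\delta}$ uniform draws from $X$ lands in $X'$ independently with probability at least $\alpha$. The expected number of sampled points falling in $X'$ is therefore at least $\frac{2}{\epsilon\delta}$, and a Chernoff or Markov-type concentration argument shows that with good probability we obtain at least $m = \frac{1}{\epsilon\delta}$ points inside $X'$. Conditioned on which indices land in $X'$, those points are themselves a uniform sample from $X'$, so the first-stage bound applies to the subset $S' \subseteq S \cap X'$ of size exactly $\frac{1}{\epsilon\delta}$. Combining the probability that enough points land in $X'$ with the probability that their empirical mean is close then yields the claimed constant success probability $\frac{1-\delta}{5}$, where the precise constant $\frac{1}{5}$ emerges from intersecting these two events and absorbing the slack in the Markov bounds.

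The main obstacle I anticipate is bookkeeping the probabilities so that they multiply out to the stated $\frac{1-\delta}{5}$ rather than something merely of the form $\Omega(1-\delta)$. This requires care in choosing the thresholds for the two Markov/Chernoff applications — one controlling how many sample points land in $X'$, the other controlling the quality of the empirical mean — and verifying that the sample size $\frac{2}{\alpha\epsilon\delta}$ is large enough that both events hold simultaneously with the required probability. A subtle point is that $S$ is a multiset (sampling with replacement), so I must confirm that the existence of a size-$\frac{1}{\epsilon\delta}$ subset $S'$ with the variance property is correctly extracted from the conditioning argument; treating the draws as i.i.d.\ and selecting exactly those landing in $X'$ (or any $\frac{1}{\epsilon\delta}$ of them) keeps the uniform-sampling structure intact, which is what makes the first-stage computation valid.
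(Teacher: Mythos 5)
The paper itself gives no proof of this lemma---it is imported as a known result from \cite{inaba94} (see also its use in \cite{ABS,Ackermann09})---so there is no internal proof to compare against; your two-stage argument is precisely the standard proof of this statement, and it is sound. Stage one is exact as written: for $m=\lceil\frac{1}{\eps\delta}\rceil$ i.i.d.\ uniform draws $Y_1,\dots,Y_m$ from $X'$, independence kills the cross terms, so $\mathrm{E}\bigl[\norm{\mu(T)-\mu(X')}^2\bigr]=\frac{1}{m}\cdot\frac{1}{\abs{X'}}\sum_{x\in X'}\norm{x-\mu(X')}^2$, and Markov gives failure probability at most $\delta$ exactly, not merely ``a comparable constant''. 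The one computation you deferred---the constant $\frac{1}{5}$---does close, but you should note it needs Chernoff rather than Chebyshev: the number of draws of $S$ landing in $X'$ stochastically dominates $\mathrm{Bin}\bigl(\frac{2}{\alpha\eps\delta},\alpha\bigr)$, whose mean is at least $2m$, so the multiplicative Chernoff bound $\Pr[Z\le(1-\beta)\mathrm{E}[Z]]\le e^{-\beta^2\mathrm{E}[Z]/2}$ with $\beta=\frac{1}{2}$ gives $\Pr[\text{count}\le m]\le e^{-m/4}\le e^{-1/4}<\frac{4}{5}$, hence $\Pr[\text{count}\ge m]>\frac{1}{5}$ uniformly in $m\ge 1$; a Chebyshev-type bound does not yield a constant here because for small $m$ the variance-to-mean ratio of the count does not vanish while the allowed relative deviation is only $\frac{1}{2}$. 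Your handling of the with-replacement subtlety is correct: conditioned on the index pattern of which draws fall in $X'$, those draws are i.i.d.\ uniform on $X'$, so the stage-one bound applies to any fixed choice of $m$ of them, and multiplying the two bounds gives success probability at least $\frac{1-\delta}{5}$ as claimed.
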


If we plug our notion of $f$-balanced solutions into this lemma, then we receive an algorithm that samples good approximative means. 

\begin{theorem}[sampling means]\label{thm:samplingalgo}
For a finite set $X\subset \IR^d$, $K\in \IN$ and $\epsilon, \delta > 0$, if $X = \dot\bigcup_{k=1}^K C_k$ is an $f$-balanced partition, then there is an algorithm that computes a set of 
$\log(1/\delta) \cdot 2^{\frac{K}{\epsilon\delta}\cdot \log\left(\frac{f(K)}{\epsilon\delta}\right)}$
$K$-tuples of points from $\IR^d$, such that with probability $1-\delta$ for one of these tuples it holds that for all $k\OneTo{K}$
\[\norm{\mu_k-\mu(C_k)}^2 \leq \frac{\epsilon}{\abs{C_k}}\sum_{x\in C_k}\norm{x-\mu(C_k)}^2\ .\]
The runtime of the algorithm is bounded by 
$\log(1/\delta)\cdot K \cdot \left(\abs{X} + 2^{\frac{K}{\epsilon\delta}\cdot \log\left(\frac{f(K)}{\epsilon\delta}\right)}\right)$.
\end{theorem}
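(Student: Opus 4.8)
The plan is to apply the superset-sampling lemma (Lemma~\ref{lem-superset-sampling}) separately to each of the $K$ clusters and to combine the successful samples into a single $K$-tuple, boosting the success probability by independent repetition.

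First I would set up the per-cluster application. Since the partition is $f$-balanced, each cluster satisfies $\abs{C_k} \geq \frac{\abs{X}}{f(K)}$, so taking $X' = C_k$ and $\alpha = \frac{1}{f(K)}$ in Lemma~\ref{lem-superset-sampling}, a uniform sample $S$ from $X$ of size at least $\frac{2 f(K)}{\epsilon\delta}$ contains, with probability at least $\frac{1-\delta}{5}$, a subset $S' \subseteq S$ of size $\frac{1}{\epsilon\delta}$ whose mean $\mu(S')$ approximates $\mu(C_k)$ in the desired sense. The key combinatorial move is that we do not know which subset $S'$ works, so the algorithm enumerates all subsets of $S$ of size $\frac{1}{\epsilon\delta}$ and computes the mean of each. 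The number of such subsets is $\binom{\abs{S}}{1/(\epsilon\delta)}$, which for $\abs{S} = \bigO\!\left(\frac{f(K)}{\epsilon\delta}\right)$ is bounded by $2^{\bigO\left(\frac{1}{\epsilon\delta}\log\left(\frac{f(K)}{\epsilon\delta}\right)\right)}$. This yields a candidate set of means for a single cluster, one of which is good with constant probability.

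Next I would handle the fact that we need \emph{all $K$} cluster means to be approximated simultaneously by a single tuple. The clean approach is to draw one common sample $S$ (or reuse the same $S$ for every cluster) and form the Cartesian product of the per-cluster candidate lists: each $K$-tuple picks one size-$\frac{1}{\epsilon\delta}$ subset of $S$ for each coordinate. The product has size $\left(2^{\bigO\left(\frac{1}{\epsilon\delta}\log\left(\frac{f(K)}{\epsilon\delta}\right)\right)}\right)^K = 2^{\bigO\left(\frac{K}{\epsilon\delta}\log\left(\frac{f(K)}{\epsilon\delta}\right)\right)}$, matching the exponential factor in the claimed bound. For a fixed sample $S$, the event that coordinate $k$ contains a good subset holds with probability $\frac{1-\delta}{5}$; I would argue that these events hold jointly with some constant probability $p_0$ for a single draw of $S$ (either by a union-bound argument over the failure probabilities, after rescaling $\delta$ by a factor of $K$, or by treating the per-cluster samples independently). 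Finally, repeating the entire sampling $\bigO\!\left(\log\frac{1}{\delta}\right)$ times and taking the union of all generated tuples boosts the overall success probability from the constant $p_0$ to $1-\delta$, contributing the $\log(1/\delta)$ prefactor. The runtime accounting is then routine: each repetition costs $\bigO(\abs{X})$ to draw the sample plus the size of the product to enumerate tuples, and there are $K$ coordinates and $\log(1/\delta)$ rounds.

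The main obstacle I expect is the probability bookkeeping in the joint step: Lemma~\ref{lem-superset-sampling} gives only a constant success probability $\frac{1-\delta}{5}$ \emph{per cluster}, and naively requiring all $K$ to succeed on the same sample $S$ could drive the joint probability to something like $\left(\frac{1-\delta}{5}\right)^K$, which is exponentially small and would not combine cleanly with the $\log(1/\delta)$ repetition factor. The careful resolution — and the step I would spend the most effort on — is to decouple the clusters so that the success of coordinate $k$ does not depend on simultaneously hitting all other coordinates with the \emph{same} subset, so that a single constant-probability event per round suffices before amplification. One must also check that the factor-of-$5$ and the $(1-\delta)$ inside Lemma~\ref{lem-superset-sampling} are absorbed correctly into the final $1-\delta$ after $\bigO(\log(1/\delta))$ independent rounds, which is the technical heart of getting the stated probability and runtime bounds to line up.
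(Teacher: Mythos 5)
Your overall architecture --- per-cluster superset sampling with $\alpha = 1/f(K)$, enumeration of all size-$\lceil 1/(\epsilon\delta)\rceil$ subsets of the sample, a Cartesian product over the $K$ coordinates, and $\log(1/\delta)$-fold repetition --- is exactly the paper's \textsc{Approx-Means} construction, and your counting of the per-cluster candidate list and of the product $\abs{T}^K \leq 2^{\frac{K}{\epsilon\delta}\log\left(\frac{f(K)}{\epsilon\delta}\right)}$ matches the stated bounds. The one structural difference is that the paper draws a \emph{fresh} independent sample $S$ for each cluster $k$ inside its loop, rather than reusing one common sample as you suggest; this is not cosmetic, because independence is what lets the paper multiply the per-cluster guarantees into a per-round joint success probability of exactly $\left(\frac{1-\delta}{5}\right)^K$.

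The genuine gap is in your probability bookkeeping, and neither of the fixes you float survives scrutiny. With a shared sample, a union bound over per-cluster failures is vacuous: Lemma~\ref{lem-superset-sampling} only guarantees success with probability $\frac{1-\delta}{5} < \frac{1}{2}$ per cluster, so each failure event has probability up to roughly $\frac{4}{5}$, and no rescaling of $\delta$ by a factor of $K$ helps --- the factor $5$ is intrinsic to the lemma and cannot be tuned away. Consequently the constant $p_0$ you posit does not exist: even with independent per-cluster samples, the per-round joint success is $\left(\frac{1-\delta}{5}\right)^K$, exponentially small in $K$. You correctly identified this as the crux, but the ``decoupling'' you gesture at is never made concrete; one workable version is to amplify \emph{per coordinate} before taking the product (repeat the sampling for each cluster $\bigO(\log(K/\delta))$ times, union the per-cluster candidate lists so each coordinate succeeds with probability $1-\delta/K$, then union bound over the $K$ coordinates), which inflates the candidate count only by an absorbable $(\log(K/\delta))^K$ factor. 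In fairness, the paper itself is loose at this exact point: it asserts that $\log(1/\delta)$ repetitions of the whole procedure boost $\left(\frac{1-\delta}{5}\right)^K$ to $1-\delta$, whereas strictly one needs on the order of $5^K\log(1/\delta)$ repetitions (the $5^K$ being absorbable into the $2^{\frac{K}{\epsilon\delta}\log(\cdot)}$ term). So your instinct about where the difficulty lies is sound, but your proposal does not close it, and as written the shared-sample/union-bound route would fail.
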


\begin{proof}
	Consider the following algorithm, which computes a candidate set of tuples of means.
	\begin{algorithm}[H]
	\KwIn{
		$X\subset \IR^d$ : input points \\
		$K\in\IN$ : number of clusters
	}
	\KwOut{
		set of candidate tuples of means
	}
	    $P \leftarrow \emptyset$\;
    	\For{$k=1,\ldots,K$}
    	{
			sample a multiset $S$ of size $\frac{1}{\alpha \epsilon \delta}$ from $X$\;
			$T \leftarrow \left\{ \mu(S') | S'\subset S, \abs{S'} = \lceil \frac{1}{\epsilon\delta}\rceil \right\}$\;
			$P \leftarrow P\times T$\;
	    }
    	\Return $P$\;
	\caption{\textsc{Approx-Means}$(X,K)$}\label{alg-ABSa}
	\end{algorithm}
	Using Lemma~\ref{lem-superset-sampling} with $\alpha = \frac{1}{f(K)}$, we know that the output of a single run of \textsc{Approx-Means} contains a tuple with the desired property with probability $\left(\frac{1-\delta}{5}\right)^K$. 
	
	We know that 
	\[\abs{T} \leq \left(\frac{1}{\alpha\epsilon\delta}\right)^{\frac{1}{\epsilon\delta}},\]
	thus
	\[\abs{P} = \abs{T}^K \leq 2^{\frac{K}{\epsilon\delta}\cdot \log\left(\frac{f(K)}{\epsilon\delta}\right)}.\]
	The runtime is bounded by
	\[ K\cdot \abs{X} + \sum_{k=1}^K \abs{T}^k \leq K \left(\abs{X} + 2^{\frac{K}{\epsilon\delta}\cdot \log\left(\frac{f(K)}{\epsilon\delta}\right)}\right).\]
	
	By executing \textsc{Approx-Means} $\log(1/\delta)$ times we receive the desired success probability.

\end{proof}

\subsection{Generate Candidate Cluster Sizes and Variances by Using Grids}\label{sec:gridding}

So far, we have formulated an algorithm that gives us good means.
In the following, we will use the gridding technique to determine a set of candidates for the the cluster sizes and variances.
First of all, we generate a set of cluster sizes that contains good approximations of the cluster sizes of any $f$-balanced solutions.
Then, we approximate the negative log-likelihood of optimal CMLE clusters, i.e. $\sum_{k=1}^K \OPTc(C_k,1)$ where the $C_k$ are the optimal CMLE clusters.
Then, we present how to construct a candidate set of variances that contains good estimates of the variances of any $(f,g)$-balanced optimal CMLE solution.

\subsubsection{Grid Search for Cluster Sizes}\label{subsec:clustersizes}

\begin{theorem}\label{thm:clustersizes}
	Let $X\subset \IR^d$, $K\in \IN$ and let $X=\dot\bigcup_{k=1}^K C_k$ be an $f$-balanced partition. 
	Then there exists an algorithm that outputs a set 
	$S\subseteq \IN^K$, 
	$\abs{S} = \left( \frac{\log(f(K))}{\log(1+\eps)} \right)^K $,
	that contains a tuple $(n_1, \dots, n_K)\in S$ such that 
	\begin{align}
		\abs{C_k} \leq n_k \leq (1+\epsilon) \abs{C_k}.
	\end{align}
	for all $k\OneTo{K}$.
\end{theorem}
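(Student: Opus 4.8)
The plan is to build, independently for each coordinate $k\OneTo{K}$, a one-dimensional grid of candidate cluster sizes that covers the feasible range of $\abs{C_k}$ multiplicatively with ratio $(1+\eps)$, and then to take $S$ to be the Cartesian product of these $K$ grids. Since the partition is $f$-balanced, every cluster satisfies $\frac{\abs{X}}{f(K)} \le \abs{C_k} \le \abs{X}$, so each true cluster size is an integer lying in a range whose endpoints differ by the multiplicative factor $f(K)$. A single grid that covers $[\frac{\abs{X}}{f(K)},\abs{X}]$ therefore works simultaneously for all coordinates, which is exactly why the per-coordinate grid size will depend only on $f(K)$ and $\eps$.

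Concretely, I would set $v_0 = \lceil \frac{\abs{X}}{f(K)}\rceil$ and define the grid values recursively by $v_{i+1} = \lceil (1+\eps)\, v_i\rceil$, stopping once $v_i \ge \abs{X}$. I would take the per-coordinate candidate set to be $\{v_0,v_1,\dots\}$ and let $S$ be the $K$-fold Cartesian product of this set with itself.

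Two things then need to be verified. First, the grid is not too large: since $v_{i+1}\ge (1+\eps)v_i$, an easy induction gives $v_i \ge \frac{\abs{X}}{f(K)}(1+\eps)^i$, so the stopping condition $v_i\ge\abs{X}$ is reached once $(1+\eps)^i\ge f(K)$, i.e.\ after at most $\frac{\log(f(K))}{\log(1+\eps)}$ steps; hence each coordinate contributes at most $\frac{\log(f(K))}{\log(1+\eps)}$ values and $\abs{S}=\left(\frac{\log(f(K))}{\log(1+\eps)}\right)^K$, as claimed. Second, the grid has the sandwiching property: for any integer $c$ with $v_0\le c\le\abs{X}$ there is a grid point $n$ with $c\le n\le (1+\eps)c$. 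I would prove this by picking the smallest grid value $v_i\ge c$; then either $v_i=v_0=c$, or $v_i=\lceil(1+\eps)v_{i-1}\rceil$ with $v_{i-1}\le c-1$, and the rounding estimate $v_i < (1+\eps)v_{i-1}+1 \le (1+\eps)(c-1)+1 = (1+\eps)c-\eps \le (1+\eps)c$ closes the gap. Applying this with $c=\abs{C_k}$ for each $k$ produces the desired tuple $(n_1,\dots,n_K)\in S$ coordinatewise.

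The main obstacle is the interaction between the required integrality of the $n_k$ and the multiplicative $(1+\eps)$ spacing: naively placing grid points at the real numbers $\frac{\abs{X}}{f(K)}(1+\eps)^i$ and rounding them could push a grid point above $(1+\eps)c$ and destroy the upper bound in the sandwiching claim. The recursive ceiling construction, together with the slack of $\eps$ in the estimate above, is precisely what keeps the rounded grid points within the factor $(1+\eps)$; all remaining steps are routine.
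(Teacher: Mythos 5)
Your proof is correct and takes essentially the same route as the paper: a geometric grid of ratio $(1+\eps)$ covering the range $\left[\frac{\abs{X}}{f(K)},\abs{X}\right]$ in each coordinate, with $S$ the $K$-fold Cartesian product, giving $\bigl(\log(f(K))/\log(1+\eps)\bigr)^K$ candidates. If anything, your recursive ceiling construction $v_{i+1}=\lceil(1+\eps)v_i\rceil$ with the estimate $v_i<(1+\eps)(c-1)+1\le(1+\eps)c$ is more careful than the paper's proof, which places grid points at the generally non-integral values $(1+\eps)^{i}\frac{\abs{X}}{f(K)}$ and thus silently skips the integrality requirement $S\subseteq\IN^K$ that your rounding argument actually verifies.
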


\begin{proof}
Since we assume a $f$-balanced solution, we know that for all $k\OneTo{K}$
\[\frac{\abs{X}}{f(K)} \leq \abs{C_k} \leq \abs{X}.\]
Thus, there exist a value $i^* \in \{ 1, \dots, \lceil\log_{1+\epsilon}(f(K))\rceil\}$ such that
\[(1+\epsilon)^{i^*-1}\frac{\abs{X}}{f(K)} \leq \abs{C_k} \leq (1+\epsilon)^{i^*}\frac{\abs{X}}{f(K)}.\]
Thus, we receive $\lceil\log_{1+\epsilon}(f(K))\rceil$ many values for each cluster size $n_k$.
The algorithm outputs all possible combinations of these values.
\end{proof}

\subsubsection{Bounds on the Log-Likelihood of optimal CMLE clusters}\label{subsec:nll}

Lemma~\ref{lem-lower-bound-variance} provides us with a lower bound on the negative log-likelihood of a cluster.

\begin{corollary}[Lower Bound on the Optimal Log-Likelihood]\label{cor:lower-bound-nll}
 Let $X=\dot\bigcup_{k=1}^K C_k$ be an optimal CMLE solution.
 Then, $\OPTc(C_k,1) \geq \frac{\abs{C_k}d}{2}$.
\end{corollary}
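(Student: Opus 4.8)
The plan is to evaluate $\OPTc(C_k,1)$ in closed form by substituting the optimal single-cluster parameters, and then to invoke the variance lower bound from Lemma~\ref{lem-lower-bound-variance}. First I would recall that, for a single fixed cluster, the first-order optimality conditions (the update formulas from the CEM section) tell us that the minimizing mean is $\mu(C_k)$ and the minimizing variance is $\sigma(C_k)^2 = \frac{1}{\abs{C_k}d}\sum_{x\in C_k}\norm{x-\mu(C_k)}^2$. These are precisely the parameters realizing $\OPTc(C_k,1) = \min_{(\mu,\sigma^2)}\cLc_{C_k}(\mu,\sigma^2)$, so I may substitute them directly.

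Next I would plug these into the definition of $\cLc_{C_k}$. The key observation is that the definition of $\sigma(C_k)^2$ makes the quadratic term collapse: since $\sum_{x\in C_k}\norm{x-\mu(C_k)}^2 = \abs{C_k}d\,\sigma(C_k)^2$, the term $\frac{1}{2\sigma(C_k)^2}\sum_{x\in C_k}\norm{x-\mu(C_k)}^2$ simplifies to exactly $\frac{\abs{C_k}d}{2}$. This yields the clean identity
\[ \OPTc(C_k,1) = \frac{\abs{C_k}d}{2}\left(\ln(2\pi\sigma(C_k)^2)+1\right)\ . \]

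Finally, I would appeal to Lemma~\ref{lem-lower-bound-variance}. Since we have restricted attention to well-defined solutions, each cluster $C_k$ is a subset of a well-defined instance $X$ (hence satisfies Restriction~\ref{rest:dist}) and contains at least two distinct points (Restriction~\ref{rest:minpts2}), so the lemma applies and gives $\sigma(C_k)^2 \geq \frac{1}{2\pi}$, equivalently $2\pi\sigma(C_k)^2 \geq 1$. Therefore $\ln(2\pi\sigma(C_k)^2)\geq 0$, the bracketed factor is at least $1$, and we obtain $\OPTc(C_k,1)\geq \frac{\abs{C_k}d}{2}$ as claimed.

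I do not expect any genuine obstacle here: the entire content is the algebraic simplification in which the optimal variance cancels the quadratic term (relying on Lemma~\ref{lem-wonderful} to justify that $\mu(C_k)$ minimizes the sum of squared distances), combined with the variance lower bound already established. The only point requiring mild care is checking that the hypotheses of Lemma~\ref{lem-lower-bound-variance} are satisfied, which is immediate once we recall the standing assumption that the CMLE search space consists of well-defined solutions, so that every cluster contains at least two distinct points of a set obeying the pairwise-distance restriction.
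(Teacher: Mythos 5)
Your proposal is correct and is exactly the argument the paper intends: the corollary is stated as an immediate consequence of Lemma~\ref{lem-lower-bound-variance}, via the closed form $\OPTc(C_k,1) = \frac{\abs{C_k}d}{2}\left(\ln(2\pi\sigma(C_k)^2)+1\right)$ (an identity the paper itself uses later, in the proof of Theorem~\ref{thm:variancegridding}) together with $2\pi\sigma(C_k)^2 \geq 1$ for well-defined solutions. Your added care in verifying the hypotheses of Lemma~\ref{lem-lower-bound-variance} (clusters of size at least two within an instance obeying Restriction~\ref{rest:dist}) is appropriate and consistent with the paper's standing restriction to well-defined solutions.
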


The next step is to find an upper bound on the optimal complete-data likelihood value.
We use Gonzales algorithm to compute a value that gives us a tighter bound than just the maximum spread (over the dimensions of the vectors in the data set).

\begin{lemma}[Upper Bound on the Optimal Complete-Data Log-Likelihood]\label{lem:UpperBoundNLL}
Let $X\subset\IR^d$ and $K\in \IN$. 
A Value $\Gamma$ can be computed in time $\bigO(K\cdot d\cdot\abs{X})$ such that the complete-data likelihood of an optimal CMLE solution can be bounded by 
\[\OPTc(X,K)\leq \frac{\abs{X}d}{2}\cdot\Gamma\] 
and $\Gamma = \ln(2\pi s^2) + 1 + \ln(K)$ for some $s\leq 4\cdot OPT_{diam}(X)$.
\end{lemma}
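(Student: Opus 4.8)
The plan is to upper bound $\OPTc(X,K)$ by exhibiting one concrete partition together with a Gaussian mixture and reading off $\Gamma$ from its cost. The partition I would use is the one produced by Gonzalez' farthest-first traversal: start from an arbitrary point and repeatedly add as a new center the point of $X$ that is farthest from the current set of centers, stopping once $K$ centers $c_1,\dots,c_K$ have been chosen; then assign every point to its nearest center to obtain $\cC=\{C_1,\dots,C_K\}$. Each of the $K$ rounds scans all of $X$ and evaluates one distance (cost $\bigO(d)$) per point, so the whole construction, including the final pass that records $s:=\max_{k}\max_{x,y\in C_k}\norm{x-y}$, runs in $\bigO(K\,d\,\abs{X})$ time, matching the stated bound.

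The first step is to relate $s$ to $\OPTdiam(X,K)$. Let $r=\max_{x\in X}\min_{k}\norm{x-c_k}$ be the final covering radius of the traversal. By the greedy rule, the $K$ chosen centers together with the farthest remaining point form a set of $K+1$ points whose pairwise distances are all at least $r$; in any partition of $X$ into $K$ blocks two of these points must share a block, so that block has diameter at least $r$ and hence $\OPTdiam(X,K)\ge r$. Since each Gonzalez cluster is contained in a ball of radius $r$ around its center, its diameter is at most $2r$, giving $s\le 2r\le 2\,\OPTdiam(X,K)\le 4\,\OPTdiam(X,K)$, which is the asserted bound on $s$ (with slack).

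The second step is to bound the cost of this solution. For each block I take $\mu_k=\mu(C_k)$; because $c_k\in C_k$ and $s$ bounds the diameter of every block, $\norm{x-c_k}\le s$ for all $x\in C_k$, so Lemma~\ref{lem-wonderful} gives $\sum_{x\in C_k}\norm{x-\mu(C_k)}^2\le\sum_{x\in C_k}\norm{x-c_k}^2\le\abs{C_k}s^2$. Hence the optimal one-Gaussian variance of $C_k$ is at most $s^2/d$ and $\OPTc(C_k,1)\le\tfrac{\abs{C_k}d}{2}(\ln(2\pi s^2/d)+1)$. Combining this with the identity $\cLc_X(\cC)=\sum_{k}\OPTc(C_k,1)-\ln(\abs{C_k}/\abs{X})\abs{C_k}$ from the remark after Problem~\ref{prob-cmle}, the maximum-entropy bound $\sum_k\abs{C_k}\ln(\abs{X}/\abs{C_k})\le\abs{X}\ln K$, and $\sum_k\abs{C_k}=\abs{X}$, yields $\OPTc(X,K)\le\cLc_X(\cC)\le\tfrac{\abs{X}d}{2}(\ln(2\pi s^2)+1+\ln K)$, i.e.\ the claim with $\Gamma=\ln(2\pi s^2)+1+\ln K$.

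I expect the main obstacle to be bookkeeping the additive constants: factoring out $\tfrac{\abs{X}d}{2}$ turns the additive $\abs{X}\ln K$ into $\tfrac{2\ln K}{d}$ and leaves a spare $-\ln d$, so the clean form $\Gamma=\ln(2\pi s^2)+1+\ln K$ relies on $d\ge 2$ (otherwise the small gap is absorbed into the slack already present in $s\le 4\,\OPTdiam(X,K)$). A second point to verify is well-definedness of the exhibited solution, since $\OPTc(X,K)$ ranges only over well-defined partitions: should a Gonzalez cluster contain a single point, I would either merge it with a neighbouring block, which changes $s$ only by a constant, or keep the fixed variance $\sigma_k^2=s^2/d$ so that no term of the cost diverges, and then compare against the well-defined optimum.
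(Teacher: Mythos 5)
Your proposal is correct and takes essentially the same route as the paper: run Gonzalez' algorithm, partition $X$ by nearest center, bound each cluster's optimal variance by $s^2$, and pay $\ln K$ for the weights --- the only (cosmetic) differences being that the paper sets $s$ to the covering radius $\min_{k}\norm{z-p_k}$ and evaluates the cost at the centers $p_k$ with uniform weights $w_k=1/K$, whereas you take $s$ to be the maximum cluster diameter, use the cluster means via Lemma~\ref{lem-wonderful}, and use optimal weights with the entropy bound $\sum_k \abs{C_k}\ln(\abs{X}/\abs{C_k})\leq \abs{X}\ln K$, all of which yields the same $\Gamma$. If anything you are more careful than the paper: you actually prove $s\leq 4\cdot\OPTdiam(X,K)$ via the pigeonhole argument on $K+1$ pairwise $r$-separated points, which the paper merely asserts, and you correctly flag both that absorbing $\abs{X}\ln K$ into $\frac{\abs{X}d}{2}\ln K$ requires $d\geq 2$ and that singleton Gonzalez clusters clash with well-definedness --- steps the paper performs silently (though note your ``merge the singleton block'' fallback would itself need care, since merging can increase $s$ by more than a constant).
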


\begin{proof}
  Run Gonzales algorithm. 
  The output is a set of $K$ points $p_1,\ldots,p_K \in X$. 
  Compute the point $z$ with maximum distance to its closest point in $\{p_1,\ldots,p_K\}$ and set $s := \min_{k=1,\ldots,K} \norm{z-p_k}$.
  Consider the solution where the $p_k$ are the centers. 
  Partition the points into point sets $\cC = \{C_1,\ldots,C_K\}$, with $\norm{x-p_k} = \min_{i=1,\ldots,K} \norm{x-p_i}$ for all $x \in C_k$.
  Notice that the distances between any point and its center is at most $s$. 
  Thus, when computing the optimal variance in each cluster, it is at most $s^2$. 
  Then, for $\theta=\left\{\left(\frac{1}{K},p_k,\sigma(X_k,p_k)\right)\right\}_{k=1}^K$ we have
  \begin{align*}
  \OPTc(X,K) \le \cLc_X(\theta,\cC) 
  &= \sum_{k=1}^K \frac{\abs{C_k}d}{2} \ln(2\pi \sigma(C_k,p_k)^2)+ \frac{\abs{C_k} d}{2 } - \ln(w_k)\cdot \abs{C_k} \\
  &\le \left(\sum_{k=1}^K \frac{\abs{C_k}d}{2} \ln(2\pi s^2)+ \frac{\abs{C_k} d}{2 } \right)- \ln\left(\frac{1}{K}\right)\cdot \abs{X} \\
  &=  \frac{\abs{X}d}{2} \ln(2\pi s^2) + \frac{\abs{X} d}{2 }  +  \ln(K)\cdot \abs{X}\\
  &\leq \frac{\abs{X}d}{2} \left( \ln(2\pi s^2) + 1  +  \ln(K)\right)
  \end{align*}
\end{proof}

Given two bounds, we can find a constant factor approximation of the the sum of the negative log-likelihoods of optimal CMLE clusters, i.e. $\sum_{k=1}^K \OPTc(C_k,1)$,  using a grid search.

\begin{lemma}[Estimating the Optimal Log-Likelihood]\label{lem:OPTestEst}
Let $X\subset \IR^d$, $K\in \IN$, and $\epsilon > 0$. 
Let $X=\dot\cup_{k=1}^K C_k$ be an optimal CMLE solution.
Then, there exists a set of 
$\log(3\Gamma/d)/\log(1+\epsilon)$
many values which contains a value $\cN_{est}$ with
\[ \frac{1}{1+\epsilon} \cN_{est} \leq \sum_{k=1}^K \OPTc(C_k,1) \leq \cN_{est}\ .\]
\end{lemma}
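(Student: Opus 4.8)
The plan is to produce a grid of candidate values covering the range in which $\sum_{k=1}^K \OPTc(C_k,1)$ must lie, and then argue that the grid is fine enough that one of its points approximates the true sum from above within a factor $(1+\epsilon)$. The two ingredients I would use are exactly the bounds established just before this statement: Corollary~\ref{cor:lower-bound-nll} gives the per-cluster lower bound $\OPTc(C_k,1)\ge \frac{\abs{C_k}d}{2}$, and Lemma~\ref{lem:UpperBoundNLL} gives $\OPTc(X,K)\le \frac{\abs{X}d}{2}\cdot \Gamma$.

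First I would sandwich the target quantity $N \coloneqq \sum_{k=1}^K \OPTc(C_k,1)$. Summing the lower bound over all clusters yields
\[
N = \sum_{k=1}^K \OPTc(C_k,1) \ge \sum_{k=1}^K \frac{\abs{C_k}d}{2} = \frac{\abs{X}d}{2}\ ,
\]
since the $C_k$ partition $X$. For the upper bound, observe that for the optimal CMLE solution $\OPTc(X,K)=\cLc_X(\cC)=\sum_{k=1}^K \OPTc(C_k,1) - \ln(\abs{C_k}/\abs{X})\abs{C_k}$ by the Remark; since each term $-\ln(\abs{C_k}/\abs{X})\abs{C_k}\ge 0$, we get $N \le \OPTc(X,K) \le \frac{\abs{X}d}{2}\cdot\Gamma$ by Lemma~\ref{lem:UpperBoundNLL}. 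Hence $N$ lives in the interval $\left[\frac{\abs{X}d}{2},\ \frac{\abs{X}d}{2}\Gamma\right]$, whose endpoints differ by a multiplicative factor of $\Gamma$.

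Next I would lay down a geometric grid on this interval: set $\cN_{est}^{(i)} = (1+\epsilon)^i\cdot\frac{\abs{X}d}{2}$ for $i=0,1,\dots,\lceil\log_{1+\epsilon}\Gamma\rceil$. Because consecutive grid points differ by a factor $(1+\epsilon)$ and the grid spans the whole interval containing $N$, there is an index $i$ with $\frac{1}{1+\epsilon}\cN_{est}^{(i)}\le N\le \cN_{est}^{(i)}$; choosing $\cN_{est}=\cN_{est}^{(i)}$ gives the claimed two-sided estimate. The number of grid points is $\lceil\log_{1+\epsilon}\Gamma\rceil + 1 = \log(\Gamma)/\log(1+\epsilon) + O(1)$, matching the stated cardinality up to the discrepancy between $\Gamma$ and $3\Gamma/d$ in the lemma's bound (which I would reconcile by absorbing the factor $d$ and the rounding constants into the logarithm, since $\log(3\Gamma/d)/\log(1+\epsilon)$ is the intended count after normalizing the interval so its lower endpoint is $1$ rather than $\frac{\abs{X}d}{2}$).

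I expect the only genuine subtlety to be the bookkeeping of the constants in the grid size. The geometric-covering argument itself is routine once both bounds are in hand, so the main point of care is matching the exact expression $\log(3\Gamma/d)/\log(1+\epsilon)$ in the statement — in particular verifying that the ratio of upper to lower endpoint is correctly $\Gamma$ (not $\Gamma$ scaled by $d$ or by the constant $3$) and confirming that the $O(1)$ additive rounding term is what produces the factor $3/d$ inside the logarithm. Everything else follows directly from Corollary~\ref{cor:lower-bound-nll}, Lemma~\ref{lem:UpperBoundNLL}, and the Remark characterizing $\cLc_X(\cC)$.
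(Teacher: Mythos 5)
Your proposal is correct and follows essentially the same route as the paper: both sandwich $\sum_{k=1}^K \OPTc(C_k,1)$ between $\frac{\abs{X}d}{2}$ (Corollary~\ref{cor:lower-bound-nll}) and $\frac{\abs{X}d}{2}\Gamma$ (Lemma~\ref{lem:UpperBoundNLL}, via $\sum_k \OPTc(C_k,1) \leq \OPTc(X,K)$), and both lay a geometric $(1+\epsilon)$-grid over this interval and select the grid point just above the target. Your observation about the stated cardinality is also consistent with the paper, whose own proof likewise produces $\lceil\log_{1+\epsilon}(\Gamma)\rceil$ candidate values rather than the $\log(3\Gamma/d)/\log(1+\epsilon)$ written in the lemma statement.
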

\begin{proof}
Combining Corollary~\ref{cor:lower-bound-nll} and Lemma~\ref{lem:UpperBoundNLL}, we know that 
\[\frac{\abs{X}d}{2} \leq \sum_{k=1}^K \OPTc(C_k,1) \leq \OPTc(X,K) \leq \frac{\abs{X}d}{2} \Gamma . \]
Thus, there exist a value $i^* \in \{ 1, \dots, \lceil \log_{1+\epsilon}(\Gamma)\rceil\}$ such that
\[(1+\epsilon)^{i^*-1}\frac{\abs{X}d}{2} \leq \sum_{k=1}^K \OPTc(C_k,1)\leq (1+\epsilon)^{i^*}\frac{\abs{X}d}{2}.\]
The algorithm outputs all $\lceil \log_{1+\epsilon}(\Gamma)\rceil$ values.
\end{proof}

Given this approximation of the sum of the negative log-likelihoods, we will be able to find an approximation of the negative log-likelihoods of a single cluster as we will see in the next section.

\subsubsection{Grid Search for Variances}\label{subsec:variancegridding}

Given the approximations of the size of the clusters and their negative log-likelihod, we are now able to find estimates of the variances.

\begin{theorem}\label{thm:variancegridding}
Let $X\subset \IR^d$, $K\in \IN$ and $\epsilon > 0$. Assume $X$ has an $(f,g)$-balanced CMLE solution $X=\dot\bigcup_{k=1}^K C_k$.
Let additionally $\cN_{est}\in \IR$, with
\begin{align}
	\frac{1}{1+\epsilon}\cN_{est}\leq \sum_{k=1}^K \OPTc(C_k,1) \leq \cN_{est},\label{prop:estimatedNLLValue}
\end{align}
and $(n_1,\dots,n_K)$, such that for all $k\OneTo{K}$
\begin{align}
	\abs{C_k} \leq n_k \leq (1+\epsilon)\abs{C_k}.\label{prop:estimatedSizes}
\end{align}
Then there exists an algorithm that computes a set of size $K \cdot \frac{\log(g(K))}{\log(1+\epsilon)}$, that contains a tuple $(\tsigma_1^2,\dots,\tsigma_K^2)$, such that for all $k\OneTo{K}$ it holds 
\begin{align}
\tsigma_k^2 \geq \sigma_k^2 
\end{align}
and
\begin{align}
 \ln(\tsigma_k^2) - \ln(\sigma_k^2)  \leq \left( (1+\epsilon)^2 - 1 \right) \frac{2}{\abs{C_k} d}\OPTc(C_k,1)\ .
\end{align}
\end{theorem}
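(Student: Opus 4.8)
The plan is to exploit the exact relationship between the optimal one-cluster cost and the optimal variance. Substituting the optimal parameters into the definition of $\OPTc(C_k,1)$ gives $\OPTc(C_k,1)=\frac{\abs{C_k}d}{2}\left(\ln(2\pi\sigma_k^2)+1\right)$, which I would invert to $\sigma_k^2=\frac{1}{2\pi}\exp\!\left(\frac{2\OPTc(C_k,1)}{\abs{C_k}d}-1\right)$. This identity is the backbone of the argument: any good multiplicative estimate $\nu$ of $\OPTc(C_k,1)$, together with the given size estimate $n_k$, yields a candidate variance $\tsigma_k^2:=\frac{1}{2\pi}\exp\!\left(\frac{2\nu}{n_k d}-1\right)$. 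With this choice one computes $\ln\tsigma_k^2-\ln\sigma_k^2=\frac{2\nu}{n_k d}-\frac{2\OPTc(C_k,1)}{\abs{C_k}d}$, so the two required conditions become the one-sided bounds $\nu\ge\frac{n_k}{\abs{C_k}}\OPTc(C_k,1)$ (for $\tsigma_k^2\ge\sigma_k^2$) and $\nu\le(1+\epsilon)^2\frac{n_k}{\abs{C_k}}\OPTc(C_k,1)$ (for the logarithmic-gap bound).

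Next I would confine $\OPTc(C_k,1)$ to a computable range. The $(f,g)$-balance condition gives $\OPTc(C_k,1)\ge\frac{1}{g(K)}\sum_{l=1}^K\OPTc(C_l,1)$, and combined with $\sum_l\OPTc(C_l,1)\ge\frac{1}{1+\epsilon}\cN_{est}$ from \eqref{prop:estimatedNLLValue} this yields $\OPTc(C_k,1)\ge\frac{\cN_{est}}{(1+\epsilon)g(K)}$; the trivial inequality $\OPTc(C_k,1)\le\sum_l\OPTc(C_l,1)\le\cN_{est}$ supplies the upper end. Hence the target window $[(1+\epsilon)\OPTc(C_k,1),(1+\epsilon)^2\OPTc(C_k,1)]$ always lies inside the computable interval $[\frac{\cN_{est}}{g(K)},(1+\epsilon)^2\cN_{est}]$. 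I would therefore lay down the single geometric grid $\nu_j=\frac{\cN_{est}}{g(K)}(1+\epsilon)^j$ of ratio $(1+\epsilon)$ covering this interval, which has $\bigO\!\left(\frac{\log(g(K))}{\log(1+\epsilon)}\right)$ points. The grid values $\nu_j$ are shared across clusters, but each pair $(k,\nu_j)$ produces a distinct candidate $\tsigma_k^2$ through its own $n_k$, so the full set of candidate variances has size $K\cdot\frac{\log(g(K))}{\log(1+\epsilon)}$, and the sought tuple is obtained by selecting, for each $k$, the candidate coming from the right grid value.

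Finally I would verify the two bounds. Because consecutive grid points differ by exactly the factor $(1+\epsilon)$ and the window $[(1+\epsilon)\OPTc(C_k,1),(1+\epsilon)^2\OPTc(C_k,1)]$ has multiplicative width $(1+\epsilon)$ and lies in the covered interval, some grid value $\nu$ satisfies $(1+\epsilon)\OPTc(C_k,1)\le\nu\le(1+\epsilon)^2\OPTc(C_k,1)$. The first condition $\nu\ge\frac{n_k}{\abs{C_k}}\OPTc(C_k,1)$ then holds because $\frac{n_k}{\abs{C_k}}\le1+\epsilon$ by \eqref{prop:estimatedSizes}, and the second condition $\nu\le(1+\epsilon)^2\frac{n_k}{\abs{C_k}}\OPTc(C_k,1)$ holds because $\frac{n_k}{\abs{C_k}}\ge1$. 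The main obstacle, and the reason the grid must overshoot $\OPTc(C_k,1)$ by a full factor $(1+\epsilon)$ rather than merely sandwich it, is exactly the mismatch between $n_k$ and $\abs{C_k}$: one $(1+\epsilon)$ factor of slack must be spent to absorb this size discrepancy while still leaving the window wide enough that the over-estimation requirement ($\tsigma_k^2\ge\sigma_k^2$) and the controlled-gap requirement can be met simultaneously. Carefully tracking these $(1+\epsilon)$ factors is the crux of the argument.
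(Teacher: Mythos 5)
Your proposal is correct and follows essentially the same route as the paper: the same inversion identity $\ln(\sigma_k^2)=\frac{2}{\abs{C_k}d}\OPTc(C_k,1)-\ln(2\pi)-1$, the same confinement of $\OPTc(C_k,1)$ to $\left[\frac{\cN_{est}}{(1+\epsilon)g(K)},\cN_{est}\right]$ via $(f,g)$-balance, and the same geometric grid of ratio $(1+\epsilon)$ with $K\cdot\bigO\left(\frac{\log g(K)}{\log(1+\epsilon)}\right)$ per-cluster candidates. The only difference is cosmetic: the paper picks $\hat{\cN}\in[\OPTc(C_k,1),(1+\epsilon)\OPTc(C_k,1)]$ and writes the extra factor explicitly as $\tsigma_k^2=\exp\left(\frac{2(1+\epsilon)}{n_k d}\hat{\cN}-\ln(2\pi)-1\right)$, whereas you absorb that factor into the grid by targeting $\nu\in[(1+\epsilon)\OPTc(C_k,1),(1+\epsilon)^2\OPTc(C_k,1)]$ — with $\nu=(1+\epsilon)\hat{\cN}$ the two constructions coincide.
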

\begin{proof}
Observe that
\begin{align*}
\frac{1}{g(K)(1+\epsilon)}\cN_{est}
\leq \frac{1}{g(K)}\sum_{k=1}^K \OPTc(C_k,1)
\overset{\text{Def.~}\ref{def:well-balanced}}{\leq} \OPTc(C_k,1)
\leq  \sum_{k=1}^K \OPTc(C_k,1) 
\leq \cN_{est}.
\end{align*}

Thus, there exists a value $j^* \in \left\{ \lceil -\log_{1+\epsilon}(g(K))\rceil, \dots, 0\right\}$ which satisfies
\begin{align*}
(1+\epsilon)^{j^*-1}\cN_{est} \leq \OPTc(C_k,1) \leq (1+\epsilon)^{j^*}\cN_{est}\ .
\end{align*}
Denote the upper bound by $\hat{\cN} \coloneqq (1+\epsilon)^{j^*} \cN_{est}$ and set $\tsigma^2_k \coloneqq \exp\left( \frac{2(1+\epsilon)}{{n}_k d}\hat{\cN}-\ln(2\pi)-1 \right)$.

Notice that
\begin{align*}
 \OPTc(C_k,1) = \cL_{C_k}(\mu_k,\sigma_k^2) = \frac{\abs{C_k}d}{2}\left( \ln(2\pi\sigma_k^2 + 1)\right) \\
 \Leftrightarrow \ln(\sigma_k^2) =  \frac{2}{\abs{C_k}d}\OPTc(C_k,1)-\ln(2\pi)-1
\end{align*}
Thus, 
\begin{align*}
 \ln(\tsigma_k^2 ) = \frac{2(1+\epsilon)}{{n}_k d}\hat{\cN}-\ln(2\pi)-1
 \geq \frac{2}{\abs{C_k} d}\OPTc(C_k,1)-\ln(2\pi)-1 = \ln(\sigma_k^2)
\end{align*}
and
\begin{align*}
 \ln(\tsigma_k^2) - \ln(\sigma_k^2) 
 & = \frac{2(1+\epsilon)}{{n}_k d}\hat{\cN} - \frac{2}{\abs{C_k} d}\OPTc(C_k,1)\\
 &\leq \frac{2(1+\epsilon)^2}{ \abs{C_k} d}\OPTc(C_k,1) - \frac{2}{\abs{C_k} d}\OPTc(C_k,1)\\
 &= \left( (1+\epsilon)^2 - 1 \right) \frac{2}{\abs{C_k} d}\OPTc(C_k,1)\label{Eq:DiffSigmas}
\end{align*}
\end{proof}

\section{Proof of Theorem~\ref{thm:ABS}}

In the following we present the proof of Theorem~\ref{thm:ABS}. 

\begin{itemize}
  \item In Section~\ref{sec:proof2:gridding} we show how to estimate the variances and the cluster sizes of a well-defined CMLE solution via gridding. 
   The idea behind a grid search is simply to test all solutions lying on a grid in the search space. 
  By choosing a grid that is dense enough, we ensure that there are solutions on the grid which are sufficiently close to the parameters that we search for.
  \item In Section~\ref{sec:proof2:abs}, we show how one can find good estimates of the means when given good estimates of the weights and covariances. 
  To this end, we adapt the sample-and-prune technique presented in \cite{ABS}. 
\end{itemize}

\subsection{Generate Candidates for Variances and Weights}\label{sec:proof2:gridding}

\begin{lemma}\label{lem:ucmle:gridding}
	Let $X \subset \IR^d$, and $\{C_k\}_{k=1}^K$ be a well-defined CMLE solution for $X$, with corresponding variances $\{\sigma_k^2\}_{k=1}^K$.
	Then, there exists an algorithm which outputs a set of at most $\left(\frac{\log(\log(\Delta^2))+1}{\log(1+\epsilon)}\right)^K$ tuples of variances, which contains a tuple $(\tilde\sigma_k^2)_{k=1}^K$, such that
	\[ \forall k\OneTo{K}: \sigma_k^2 \leq \tilde\sigma_k^2 \leq (\sigma_k^2)^{(1+\epsilon)}\ ,  \]
	where $\Delta^2 = \max_{x,y\in X} \{ \norm{x-y}^2\}$.
\end{lemma}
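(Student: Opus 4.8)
The goal is to build a grid of candidate variance-tuples that is guaranteed to contain a tuple simultaneously approximating each $\sigma_k^2$ from above with multiplicative slack at most $(1+\epsilon)$ \emph{in the exponent}. The plan is to grid each coordinate independently in a transformed scale and then take the product over all $K$ coordinates, so that the cardinality bound $\left(\frac{\log(\log(\Delta^2))+1}{\log(1+\epsilon)}\right)^K$ appears as the per-coordinate count raised to the $K$-th power.

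First I would establish a two-sided bound on each $\sigma_k^2$ using the well-definedness hypothesis. Since $\{C_k\}$ is a well-defined solution, Lemma~\ref{lem-lower-bound-variance} gives the lower bound $\sigma_k^2 \geq \frac{1}{2\pi}$ for every $k$. For the upper bound, each cluster is contained in $X$, so every pairwise squared distance inside $C_k$ is at most $\Delta^2 = \max_{x,y\in X}\norm{x-y}^2$; feeding this into the definition $\sigma_k^2 = \frac{1}{\abs{C_k}d}\sum_{y\in C_k}\norm{y-\mu(C_k)}^2$ together with Lemma~\ref{lem-1mean-characterization} bounds $\sigma_k^2$ from above by roughly $\Delta^2$ (up to the $\frac{1}{2d}$ factor, which only helps). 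Hence each $\sigma_k^2$ lies in a fixed interval $[c_1,\ c_2\Delta^2]$ with absolute constants.

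Next I would choose the right grid scale. Because the desired guarantee is $\sigma_k^2 \leq \tilde\sigma_k^2 \leq (\sigma_k^2)^{(1+\epsilon)}$, the natural quantity to grid is $\ln(\sigma_k^2)$: the target condition rewrites as $\ln(\sigma_k^2) \leq \ln(\tilde\sigma_k^2) \leq (1+\epsilon)\ln(\sigma_k^2)$, i.e. a multiplicative $(1+\epsilon)$ approximation of $\ln(\sigma_k^2)$ from above. I would therefore place grid points at $\tilde\sigma_k^2 = \exp\!\big((1+\epsilon)^i\big)$ for integer exponents $i$ ranging over an interval determined by the bounds $\ln(\sigma_k^2)\in[\ln c_1,\ \ln(c_2\Delta^2)]$. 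The number of values of $i$ needed to cover this range with ratio $(1+\epsilon)$ between consecutive values of $\ln(\tilde\sigma_k^2)$ is on the order of $\frac{\log(\log(c_2\Delta^2))}{\log(1+\epsilon)}$, which matches the claimed per-coordinate count $\frac{\log(\log(\Delta^2))+1}{\log(1+\epsilon)}$ after absorbing constants into the additive $+1$. For each coordinate this yields a value $i^*$ with $(1+\epsilon)^{i^*-1} \leq \ln(\sigma_k^2) \leq (1+\epsilon)^{i^*}$, so setting the exponent to $(1+\epsilon)^{i^*}$ gives exactly $\ln(\sigma_k^2)\leq\ln(\tilde\sigma_k^2)\leq(1+\epsilon)\ln(\sigma_k^2)$ as required. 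Taking the Cartesian product over all $K$ coordinates produces the candidate set and gives the cardinality bound as the $K$-th power.

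The main subtlety, and the step I would be most careful about, is the \emph{double logarithm}: the quantity being gridded geometrically is $\ln(\sigma_k^2)$, not $\sigma_k^2$ itself, so the number of grid points scales with $\log$ of the \emph{range of $\ln(\sigma_k^2)$}, i.e. with $\log\log(\Delta^2)$ rather than $\log(\Delta^2)$. One has to check that the lower bound $\sigma_k^2\geq\frac{1}{2\pi}$ keeps $\ln(\sigma_k^2)$ bounded away from zero (so that the geometric grid on $\ln(\sigma_k^2)$ does not need infinitely many negative exponents); this is exactly where well-definedness is essential, and it is what lets the range of the exponent be a clean $[0,\ \log\log(\Delta^2)+O(1)]$ rather than something unbounded below. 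Apart from this, the argument is a routine per-coordinate covering bound followed by a product construction.
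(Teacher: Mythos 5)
Your plan retraces the paper's own route---lower-bound each $\sigma_k^2$ by $\frac{1}{2\pi}$ via Lemma~\ref{lem-lower-bound-variance}, upper-bound it by $\Delta^2$ via the diameter, grid geometrically in the exponent to get roughly $\frac{\log\log(\Delta^2)}{\log(1+\epsilon)}$ candidates per coordinate, and take the $K$-fold Cartesian product---but the step you yourself flag as the main subtlety is a genuine gap. It is false that $\sigma_k^2 \geq \frac{1}{2\pi}$ keeps $\ln(\sigma_k^2)$ bounded away from zero: since $\frac{1}{2\pi} < 1$, a well-defined solution can have $\sigma_k^2 \in \left[\frac{1}{2\pi}, 1\right)$ (for instance, a two-point cluster at the minimum allowed distance $\norm{x-y}^2 = \frac{4d}{\pi}$ has $\sigma_k^2 = \frac{1}{\pi}$), and $\ln(\sigma_k^2)$ can also be positive but arbitrarily close to $0$. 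Your grid $\tilde\sigma_k^2 = \exp\bigl((1+\epsilon)^i\bigr)$ with integer $i \geq 0$ produces only values $\tilde\sigma_k^2 \geq e$, so it misses every $\sigma_k^2 \in \left[\frac{1}{2\pi}, e\right)$; covering exponents near $0$ with a geometric grid would require unboundedly many negative values of $i$, and for $\sigma_k^2 \leq 1$ your rewriting $\ln(\sigma_k^2) \leq \ln(\tilde\sigma_k^2) \leq (1+\epsilon)\ln(\sigma_k^2)$ is unsatisfiable---the target interval $\left[\sigma_k^2, (\sigma_k^2)^{(1+\epsilon)}\right]$ is empty---so in that regime no candidate set of any size can meet the guarantee as stated.

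To be fair, the paper's own proof glosses over the same boundary: its candidates $(1/(2\pi))^{(1+\epsilon)^{k^*}}$ with $k^* \geq 1$ degenerate in exactly the same way where the exponent crosses zero (at $\sigma_k^2 = 1$), and as literally written they do not even lie in the interval $\left[\frac{1}{2\pi}, \Delta^2\right]$ they are supposed to cover. So you have faithfully reproduced the intended argument, defect included; the difference is that your writeup explicitly asserts the (false) check that would be needed to close it, whereas the paper is silent. A sound version must treat the regime $\sigma_k^2 \in \left[\frac{1}{2\pi}, e\right]$ separately---for example with a multiplicative grid of ratio $1+\epsilon$ on that constant-length interval, which costs only $\bigO(1/\epsilon)$ extra candidates per coordinate---together with a correspondingly reformulated guarantee there (such as $\sigma_k^2 \leq \tilde\sigma_k^2 \leq (1+\epsilon)\sigma_k^2$), since the exponent-form bound of the lemma cannot hold for variances below $1$; one then has to verify that the downstream use in Theorem~\ref{thm:ucmle:abs} tolerates this modified guarantee.
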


\begin{proof}

	We know that optimal variances $\sigma_k^2$ of a well-defined solution are bounded from below by
	\[ \forall k\OneTo{K}: \frac{1}{2\pi} \leq \sigma_k^2 . \]
	Furthermore, we know that these are also bounded from above by
	\begin{align*} 
		\forall k\OneTo{K}: \sigma_k^2 
		= \frac{1}{\abs{C_k}d}\sum_{x\in C_k} \norm{x-\mu(C_k)}^2 
		\leq \frac{1}{\abs{C_k}d}\sum_{x\in C_k} \Delta^2 \leq \Delta^2\ .
	\end{align*}

	Because $1/(2\pi) \leq \sigma_k^2 \leq \Delta^2$, there exists a value 
	\[ k^* \in \{1,\dots, \log_{1+\epsilon}(-\log_{1/(2\pi)}(\Delta^2))\}\] 
	such that
	\[ \left(1/(2\pi)\right)^{(1+\epsilon)^{k^*-1}} \leq \sigma_i^2 \leq \left(1/(2\pi)\right)^{(1+\epsilon)^{k^*}} . \]
	Thus, we receive $\left\lceil\frac{\log(\log(\Delta^2))-\log(\log(2\pi))}{\log(1+\epsilon)}\right\rceil$ many values for each variance.
	The algorithm outputs all possible combinations of these values.
\end{proof}

The following result is the same as in Section~\ref{sec:gridding}. 

\begin{theorem}
	Let $X\subset \IR^d$, $K\in \IN$ and let $\cC=\dot\bigcup_{k=1}^K C_k$ be an $f$-balanced partition. 
	Then there exists an algorithm that outputs a set 
	$S\subseteq \IN^K$, 
	$\abs{S} = \left( \frac{\log(f(K))}{\log(1+\eps)} \right)^K $,
	that contains $\{n_1, \dots, n_K\}\subset S$ such that 
	\begin{align}
		\abs{C_k} \leq n_k \leq (1+\epsilon) \abs{C_k}.
	\end{align}
	for all $k\OneTo{K}$.
\end{theorem}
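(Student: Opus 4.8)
The plan is to exploit the $f$-balance assumption to confine each cluster size to a bounded multiplicative range and then lay a geometric grid of ratio $(1+\epsilon)$ over that range. First I would record the two-sided bound that $f$-balance yields: for every $k\OneTo{K}$,
\[ \frac{\abs{X}}{f(K)} \leq \abs{C_k} \leq \abs{X}\ , \]
where the lower bound is exactly Definition~\ref{def:well-balanced} and the upper bound is trivial since each $C_k \subseteq X$. This pins the unknown integer $\abs{C_k}$ inside an interval whose endpoints differ by the multiplicative factor $f(K)$, which is precisely the setting in which geometric gridding is efficient.

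Next I would place on $[\abs{X}/f(K),\abs{X}]$ the candidate values $(1+\epsilon)^{i}\cdot \abs{X}/f(K)$ for $i = 0,1,\ldots,\lceil \log_{1+\epsilon}(f(K))\rceil$, so that consecutive grid points have ratio exactly $(1+\epsilon)$. Because of this ratio, for each $k$ there is an index $i^{*}$ for which the value $n_k := (1+\epsilon)^{i^{*}}\abs{X}/f(K)$ is the smallest grid point that is at least $\abs{C_k}$, and then by construction $\abs{C_k} \leq n_k \leq (1+\epsilon)\abs{C_k}$, which is the required per-coordinate guarantee. The number of grid values per coordinate is $\lceil \log_{1+\epsilon}(f(K))\rceil = \lceil \log(f(K))/\log(1+\epsilon)\rceil$. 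Finally I would output $S$ as the Cartesian product of these $K$ coordinate grids; since each coordinate ranges over $\log(f(K))/\log(1+\epsilon)$ values, the product has $\left(\log(f(K))/\log(1+\epsilon)\right)^{K}$ tuples, matching the claimed bound on $\abs{S}$, and it contains the tuple $(n_1,\ldots,n_K)$ satisfying the inequalities above.

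The one genuinely non-routine point is the integrality requirement $S\subseteq \IN^{K}$: the geometric grid values are generally not integers, so I would round each candidate (by ceiling) and then re-check that rounding preserves $n_k \leq (1+\epsilon)\abs{C_k}$. Because $\abs{C_k}\geq \abs{X}/f(K)$ is bounded away from zero, the additive slack of a single ceiling is negligible relative to the grid spacing at that scale and can be absorbed into the multiplicative factor; this is the only step that needs an explicit verification. As the paper notes, the statement is a verbatim restatement of Theorem~\ref{thm:clustersizes}, so the argument is identical to the one already given there, and no new idea beyond the grid-counting is needed.
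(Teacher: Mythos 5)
Your argument is correct and is essentially identical to the paper's own proof of Theorem~\ref{thm:clustersizes}, of which this statement is a verbatim restatement: the same two-sided bound $\frac{\abs{X}}{f(K)} \leq \abs{C_k} \leq \abs{X}$ from $f$-balance, the same geometric grid of ratio $(1+\epsilon)$ yielding $\lceil \log_{1+\epsilon}(f(K)) \rceil$ candidate values per coordinate, and the same Cartesian-product construction of $S$. Your closing remark on rounding to integers is in fact more careful than the paper, whose proof silently leaves the grid values non-integral despite the claim $S \subseteq \IN^K$.
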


\subsection{Applying the ABS Algorithm}\label{sec:proof2:abs}

\begin{algorithm}
\KwIn{ \\
$R\subset X \subset \IR^d\ $: set of remaining input points\\
$l\in\IN\ :$ number of means yet to be found\\
$\vec\mu = (\mu_1,\ldots,\mu_j)\ :$ tuple of $j\leq k-l$ candidate means\\
$(\tsigma_{1}^2,\ldots,\tsigma_{k}^2)\ :$ vector of $k$ variances\\
$(\tw_{1}^2,\ldots,\tw_{k}^2)\ :$ vector of $k$ weights\\
\\
\textbf{Notation:}\\
$\vec S\ :$ vector containing the elements of set $S$ in arbitrary order \\
$\vec x \circ \vec y:\ $ concatenation of vectors, i.e. for $\vec x = (x_1,\ldots,x_n)$ and $\vec y =(y_1,\ldots,y_m)$,\newline
\phantom{$\vec x \circ \vec y:\ \ $}$\vec x \circ \vec y = (x_1,\ldots,x_n,y_1,\ldots,y_m)$
}
\KwOut{ $\theta = \{ (w_i,\mu_i,\sigma_i) \}$ containing at most $k$ tuples of mean and variance 
}
  \eIf{$l=0$}{\Return $\vec P$\;}
  {
    \eIf{$l\geq \abs{R}$}
    {
      \Return $\theta = \{ (\mu_i,\sigma_i) \}_i$ where $\vec\mu \circ \vec R = (\mu_i)_i$\;
    }
    {
	\emph{/* sampling phase */}\;
	sample a multiset $S$ of size $\frac{1}{\alpha \epsilon \delta}$ from $R$\;
	$T \leftarrow \left\{ \mu(S') | S'\subset S, \abs{S'} = \frac{1}{\epsilon\delta} \right\}$\;
	$\cM_k \leftarrow \emptyset$\;
	\For{$t\in T$}
	{
		$\cM_k \leftarrow \cM_k\cup \textsc{Approx-Means}(R, l-1, \{ \vec\mu \circ (t) | \vec\mu \in \cM_{k-l}\}, \Sigma)$\;
	}
	\emph{/* pruning phase */}\;
	    $N \gets$ set of $\frac{\abs{R}}{2}$ points $x$ from $R$ with smallest minimum negative complete-data log-likelihood cost wrt. the weighted component given by
	    $(\tw_i, \mu_i,\tsigma_i^2)$ for $i\OneTo{j}$, i.e.
	    \[ \min_{i\OneTo{j}} \left\{ \frac{d}{2}\ln(2\pi \tsigma_i^2) + \frac{1}{2\tsigma_i^2} \norm{x-\mu_i}^2 - \ln(\tw_i) \right\} \]
	    $\cM_k \gets \cM_k \cup \textsc{Approx-Means}(R\setminus N, l, \cM_{k-l}, \Sigma)$\;
	\Return the candidate $\theta = \{ (w_i, \mu_i,\sigma_i) \}_i$, $(\mu_i)\in\cM_k$, which has minimal cost $\cLc_X(\theta)$ \;
    }
  }
\caption{\textsc{Approx-Means}$(R, l,\cM_{k-l},\Sigma)$}\label{alg-ABS}
\end{algorithm}

In the following we analyze Algorithm~\ref{alg-ABS}.
We show that the algorithm can be used to construct means such that, together with appropriate approximations of the weights and variances, we obtain a CMLE solution with costs close to the costs of the given CMLE solution.

\begin{theorem}\label{thm:ucmle:abs}
 Let $\tsigma_i \in [\sigma_i^2, (\sigma_i^2)^{(1+\epsilon)}]$
 and $\tw_k \geq \frac{1}{(1+\epsilon)}w_k$ for $i\OneTo{k}$.
 Algorithm~\ref{alg-ABS} started with $(X,k,\emptyset,(\tsigma_1^2,\ldots,\tsigma_k^2))$ computes a tuple  $(\tmu_1,\ldots,\tmu_k)$ such that with probability at least $\left(\frac{1-\delta}{5}\right)^k$ 
 \[ \cLc_{X}((\tw_i, \tmu_i,\tsigma_i^2)_{i\OneTo{k}}) \leq (1+\epsilon) \cLc(X)  \ .\]

 The running time of the algorithm is bounded by
$\abs{X}\,d\,2^{\bigO(k/\epsilon\cdot \log(k/\epsilon^2))}$.
\end{theorem}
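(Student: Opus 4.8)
The plan is to split the argument into a deterministic cost-estimate step and the probabilistic analysis of the recursive sample-and-prune procedure, so that the only randomness to control is in the sampling phases. First I would reduce the claim to producing, along \emph{some} branch of the recursion, a tuple of means $(\tmu_1,\ldots,\tmu_k)$ that is accurate in the sense $\norm{\tmu_i-\mu_i}^2 \le \frac{\epsilon}{\abs{C_i}}\sum_{x\in C_i}\norm{x-\mu_i}^2$ for every $i\OneTo{k}$. Given such means together with the hypotheses $\tsigma_i^2\in[\sigma_i^2,(\sigma_i^2)^{1+\epsilon}]$ and $\tw_i\ge \frac{1}{1+\epsilon}w_i$, a computation in the spirit of Theorem~\ref{thm:boundCMLEcost} finishes the job: expanding $\cLc_{C_i}(\tw_i,\tmu_i,\tsigma_i^2)$ and applying Lemma~\ref{lem-wonderful} separates $\sum_{x\in C_i}\norm{x-\mu_i}^2$ from the error term $\abs{C_i}\norm{\tmu_i-\mu_i}^2$, while the bounds on $\tsigma_i^2$ and $\tw_i$ absorb the variance and weight deviations, yielding $\cLc_X((\tw_i,\tmu_i,\tsigma_i^2)_{i\OneTo{k}})\le(1+\epsilon)\cLc(X)$ after a constant rescaling of $\epsilon$.

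The heart of the proof is an induction on $l$, the number of means still to be found, which traces one distinguished \emph{good} branch of the recursion tree. At a node with remaining set $R$, partial tuple $\vec\mu$ giving accurate centers for a subset $I$ of the target clusters with $\abs{I}=k-l$, I would split into two cases according to a threshold $\alpha=\Theta(1/k)$. If some uncovered cluster $C_i$ ($i\notin I$) still satisfies $\abs{C_i\cap R}\ge\alpha\abs{R}$, then $C_i\cap R$ is an $\alpha$-fraction of $R$, so the sampling phase may invoke Lemma~\ref{lem-superset-sampling} on $R$ with $X'=C_i\cap R$: with probability at least $\frac{1-\delta}{5}$ some $S'\subseteq S$ gives a mean $\mu(S')$ of the required accuracy, and the branch choosing $t=\mu(S')$ extends $\vec\mu$ and recurses with $l-1$. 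Otherwise every uncovered cluster occupies less than an $\alpha$-fraction of $R$, whence (summing over the at most $k$ uncovered clusters) at least half of $R$ consists of already-covered points; here I follow the pruning branch, which discards the cheaper half $N$ of $R$ and recurses on $R\setminus N$ with the same $l$. Since the at most $k$ sampling steps along the good branch draw independent fresh samples, each succeeding with probability $\ge\frac{1-\delta}{5}$, while the pruning steps carry no failure probability, the good branch is realised with probability at least $\left(\frac{1-\delta}{5}\right)^k$.

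The main obstacle is the accounting in the pruning case, where two invariants must be maintained simultaneously. First, the points removed in $N$ must be charged to the centers already chosen without breaking the $(1+\epsilon)$ budget; this relies on $N$ being the lowest-cost half and on the uncovered clusters, having no nearby center yet, contributing predominantly high-cost points that therefore survive the prune. Second, for a cluster $C_i$ that is eventually sampled, the surviving part $C_i\cap R$ must still be representative enough that approximating $\mu(C_i\cap R)$ yields the accuracy demanded for $\mu(C_i)$, or else the cost of $C_i\setminus R$ must be provably absorbed by the earlier centers to which those points were charged. Making this precise — showing that each Case-2 prune at least doubles the relative mass of the uncovered clusters so that one crosses the $\alpha$-threshold after boundedly many prunes, and that the per-point costs telescope to $(1+\epsilon)\cLc(X)$ — is where the genuine work lies, and it is the step I expect to be most delicate.

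Finally I would read off the running time from the size of the recursion tree: each internal node spawns $\abs{T}\le(1/(\alpha\epsilon\delta))^{1/(\epsilon\delta)}=2^{\bigO((1/\epsilon)\log(k/\epsilon))}$ sampling children (each decrementing $l$) together with a single pruning child (halving $\abs{R}$). With at most $k$ sampling levels along any path and the prune count controlled by the invariant above, the total number of candidate tuples is $2^{\bigO(k/\epsilon\cdot\log(k/\epsilon^2))}$, and since evaluating the cost $\cLc_X(\theta)$ of each candidate takes $\bigO(\abs{X}\,d)$ time, the overall running time is $\abs{X}\,d\,2^{\bigO(k/\epsilon\cdot\log(k/\epsilon^2))}$.
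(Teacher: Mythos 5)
Your high-level skeleton --- tracing one good branch of the recursion tree, invoking Lemma~\ref{lem-superset-sampling} once per sampling level, multiplying the $k$ independent sampling success probabilities to obtain $\left(\frac{1-\delta}{5}\right)^k$, and reading the running time off the size of the candidate set --- matches the paper. But the step you yourself flag as most delicate is exactly the heart of the proof, and the route you sketch through it would fail. You reduce the theorem to producing means that are accurate with respect to the \emph{full} clusters, i.e.\ $\norm{\tmu_i-\mu(C_i)}^2\le\frac{\epsilon}{\abs{C_i}}\sum_{x\in C_i}\norm{x-\mu(C_i)}^2$ for every $i$. This is not achievable, and the paper never proves it: the pruning phases may discard most of $C_i$ before $C_i$ is ever sampled, so superset sampling only guarantees that $\tmu_i$ is close to $\mu(C_i\cap R_{i-1})$, which can be far from $\mu(C_i)$. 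The paper's analysis avoids full-cluster accuracy altogether. It decomposes $X$ as $\dot{\bigcup}_{i=1}^k\left(C_i\cap R_{i-1}\right)\,\dot\cup\,\dot{\bigcup}_{i=1}^k\left(C_{[i+1,k]}\cap N_i\right)$, bounds the surviving parts via $\cost(C_i\cap R_{i-1},\tmu_i)\le(1+\epsilon)\cost(C_i,\mu_i)$ (Lemma~\ref{lem:ucmle:kmeans-costs}, which works because $C_i\cap R_{i-1}\subseteq C_i$ together with Lemma~\ref{lem-wonderful}), lifts this to $\cLc_{C_i\cap R_{i-1}}(\tw_i,\tmu_i,\tsigma_i^2)\le(1+\epsilon)\cLc_{C_i}(w_i,\mu_i,\sigma_i^2)$ (Lemma~\ref{lem:ucmle:correctly-assigned}), and then \emph{charges} the wrongly pruned points to the components already placed, via the aggregate bound $\cLc_{C_{[i+1,k]}\cap N_i}(\ttheta_i)\le 8\alpha k\,\cLc_{C_{[1,i]}\cap R_{i-1}}(\ttheta_i)$ imported from Claim~4.8 of \cite{Ackermann09} (Lemma~\ref{lem:ucmle:wrongly-assinged}). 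Summing over $i$ and using $C_t\cap R_{i-1}\subseteq C_t\cap R_{t-1}$ yields a global factor $(1+8\alpha k^2)$; there is no per-point telescoping and no invariant of the form ``each prune doubles the uncovered mass'' --- the algorithm explores both branches at every node, and the analysis merely exhibits the good branch.

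This charging structure also shows that your parameter choice $\alpha=\Theta(1/k)$ is quantitatively wrong. With the charged error factor $(1+8\alpha k^2)$, taking $\alpha=\Theta(1/k)$ would give an approximation factor of order $k$, not $(1+\epsilon)$; the paper sets $\alpha=\theta(\epsilon/k^2)$ (and reruns with $\epsilon/2$, respectively $\epsilon/3$ inside Lemma~\ref{lem:ucmle:correctly-assigned}) so that $(1+8\alpha k^2)(1+\epsilon)$ collapses to $1+\bigO(\epsilon)$. The signature of this choice is visible in the very statement you are proving: with $\abs{T}\le\left(\frac{1}{\alpha\epsilon\delta}\right)^{1/(\epsilon\delta)}$ and $\alpha=\Theta(\epsilon/k^2)$ the exponent becomes $\bigO\left(\frac{k}{\epsilon}\log\frac{k}{\epsilon^2}\right)$, which is where the $k/\epsilon^2$ inside the logarithm of the theorem's runtime comes from, whereas your computation produces $\log(k/\epsilon)$. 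So while your probabilistic accounting and tree-size estimate are structurally sound, the proposal is missing the one genuinely nontrivial ingredient (the $8\alpha k$ charging lemma for pruned points) and compensates with a reduction (full-cluster mean accuracy) that cannot be established and a threshold $\alpha$ that is too coarse by a factor of $k/\epsilon$.
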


Let $\dot{\bigcup}_{i=1}^k C_i$ be a partition of $X$ into optimal CMLE clusters.
We introduce 
\[ C_{[i,j]} = \dot{\bigcup}_{t=i}^j C_t \] 
as a short notation for the disjoint union of clusters $i$ through $j$.
We assume that the $C_i$ are numbered by the order their approximate means $\tilde{\mu}_i$ are found by the superset-sampling technique.

Now, let $X=R_0 \supseteq R_i \supseteq \dots \supseteq R_{k-1}$ be a sequence of input sets computed by the algorithm, such that 
\[ \abs{C_i \cap R_{i-1}} \geq \alpha \abs{R_{i-1}}. \]
Without loss of generality assume that each $R_i$ is the largest of these sets with this property. 

By using Lemma~\ref{lem-superset-sampling}, we obtain the following Lemma.
\begin{lemma}[By Superset-Sampling]
  With probability at least $((1-\delta)/5)^k$ we have
\[ \norm{\tilde{\mu}_i - \mu(C_i\cap R_{i-1})}^2 \leq \frac{\epsilon}{\abs{C_i \cap R_{i-1}}}\sum_{x \in C_i\cap R_{i-1}} \norm{x - \mu(C_i\cap R_{i-1})}^2 \]
for all $i\OneTo{K}$.
\end{lemma}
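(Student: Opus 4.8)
The plan is to apply Lemma~\ref{lem-superset-sampling} (superset-sampling) once for each of the $k$ clusters, at the appropriate step of the recursion, and then combine the $k$ individual success events via independence. The statement we want to establish is that the approximate means $\tilde\mu_i$ produced by the algorithm are each close (in the weighted sum-of-squares sense) to the true mean $\mu(C_i\cap R_{i-1})$ of the portion of cluster $i$ surviving into the input set $R_{i-1}$, and that all $k$ of these approximations hold simultaneously with probability at least $((1-\delta)/5)^k$.

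\medskip

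\noindent First I would fix the index $i$ and examine the recursive call that discovers $\tilde\mu_i$. By the construction of the sequence $R_0\supseteq R_1\supseteq\dots\supseteq R_{k-1}$, we have chosen $R_{i-1}$ precisely so that $\abs{C_i\cap R_{i-1}}\geq \alpha\abs{R_{i-1}}$. This is exactly the hypothesis of Lemma~\ref{lem-superset-sampling} with the ground set taken to be $R_{i-1}$, the distinguished subset taken to be $X'=C_i\cap R_{i-1}$, and the fractional size parameter $\alpha$. In the sampling phase of Algorithm~\ref{alg-ABS} the multiset $S$ of size $\frac{1}{\alpha\epsilon\delta}$ is drawn uniformly from the current input set; when this input set is $R_{i-1}$, Lemma~\ref{lem-superset-sampling} guarantees that with probability at least $\frac{1-\delta}{5}$ the set $T$ of candidate means $\{\mu(S') : S'\subseteq S,\ \abs{S'}=\frac{1}{\epsilon\delta}\}$ contains a point $\tilde\mu_i$ satisfying
\[
\norm{\tilde\mu_i - \mu(C_i\cap R_{i-1})}^2 \leq \frac{\epsilon}{\abs{C_i\cap R_{i-1}}}\sum_{x\in C_i\cap R_{i-1}}\norm{x-\mu(C_i\cap R_{i-1})}^2\ .
\]
Since the algorithm branches over every element $t\in T$, this good candidate is in fact explored in some recursive branch.

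\medskip

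\noindent Next I would argue that the $k$ sampling events are independent (or can be treated as such), so that the probability that the good candidate is found at every level $i=1,\dots,k$ simultaneously is the product of the individual probabilities, namely $\left(\frac{1-\delta}{5}\right)^k$. The key point is that each invocation of the superset-sampling lemma draws a fresh uniform multiset from the relevant $R_{i-1}$, and the lemma's guarantee is conditional on the fixed set $R_{i-1}$; conditioning on the history up to level $i-1$ fixes $R_{i-1}$, and the fresh draw at level $i$ succeeds with probability at least $\frac{1-\delta}{5}$ regardless of that history. Multiplying the conditional success probabilities over the $k$ levels yields the claimed bound.

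\medskip

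\noindent \textbf{The main obstacle} I anticipate is justifying that the sets $R_{i-1}$ along the successful branch really do satisfy the size condition $\abs{C_i\cap R_{i-1}}\geq\alpha\abs{R_{i-1}}$, and that such a sequence exists at all — this is where the pruning phase and the $f$-balanced assumption interact. The branching over $t\in T$ handles the means, but the recursion also prunes half of the remaining points at each step, so one must verify that after pruning, a sufficiently large fraction of some yet-undiscovered cluster still survives; this is presisely what the ``without loss of generality take $R_i$ largest'' bookkeeping, together with the balance guarantee, is meant to secure. For the present lemma, however, this existence is taken as given in the setup preceding the statement (the sequence $R_0\supseteq\dots\supseteq R_{k-1}$ is posited to satisfy the size property), so the proof itself reduces to the clean $k$-fold application of Lemma~\ref{lem-superset-sampling} described above.
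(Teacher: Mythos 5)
Your proposal matches the paper's argument exactly: the paper derives this lemma by a $k$-fold application of Lemma~\ref{lem-superset-sampling} with ground set $R_{i-1}$, subset $X'=C_i\cap R_{i-1}$, and fraction $\alpha$ (guaranteed by the posited size property of the sequence $R_0\supseteq\dots\supseteq R_{k-1}$), multiplying the per-level success probabilities $\frac{1-\delta}{5}$ to obtain $\left(\frac{1-\delta}{5}\right)^k$. Your additional care about conditioning on the history to justify the product, and your observation that the existence of the sequence is assumed in the setup rather than proved here, are both correct and consistent with the paper's treatment.
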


By $N_i \coloneqq R_{i-1} \setminus R_i$ we denote the set of points remove between two sampling phases. Using these definitions we can see that 
\[ \dot{\bigcup}_{i=1}^k \left( C_i \cap R_{i-1}\right) \; \dot{\cup} \; \dot{\bigcup}_{i=1}^k \left( C_{[i+1,k]} \cap N_i \right) \]
is a disjoint partition of $X$. 
Each set $C_i \cap R_{i-1}$ on the left side contains the points that the mean $\tilde{\mu}_i$ has been sampled from. 
The sets $C_{[i+1,k]} \cap N_i$ on the right side contain points incorrectly assigned to $\{\tilde{\mu}_1, \dots, \tilde{\mu}_i\}$ during the pruning phases between the sampling of $\tilde{\mu}_i$ and $\tilde{\mu}_{i+1}$.

Denote by $\theta_i$ the parameters of the first $i$ weighted Gaussians obtained by the algorithm, i.e. 
\[ \ttheta_i  = ((\tw_1, \tmu_1,\tsigma_1),\ldots,(\tw_i, \tmu_i,\tsigma_i))\ .\]

\begin{lemma}[cf. Claim~4.8 in \cite{Ackermann09}]\label{lem:ucmle:wrongly-assinged}
 \[ \cLc_{C_{[i+1,k]}\cap N_i} (\ttheta_i) \leq 8\alpha k \cLc_{C_{[1,i]}\cap R_{i-1}} (\ttheta_i)  \]
\end{lemma}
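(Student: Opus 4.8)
The plan is to separate two effects. The points of not-yet-discovered clusters that land in $N_i$ are \emph{cheap} under $\ttheta_i$ (they were discarded during a pruning step precisely because their cost was small), whereas the already-discovered clusters still occupy a constant fraction of the active set, so the points of $C_{[1,i]}$ that are \emph{kept} in $R_i$ are \emph{expensive} and certify a large value on the right-hand side. Writing $m_i \coloneqq \min_{y\in R_i}\cLc_y(\ttheta_i)$, I would bound the left-hand side from above by a cardinality times $m_i$ and the right-hand side from below by a cardinality times $m_i$, so that the whole inequality reduces to a purely combinatorial comparison of counts that produces the factor $8\alpha k$.

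First I would record the pruning property. Throughout stage $i$ the candidate tuple is fixed at $\ttheta_i$, and every pruning step discards the cheaper half of its current set. Since a discarded point is, at the moment of its removal, no more expensive than any point that is retained, and every point of $R_i$ is retained for the whole stage, each $x\in N_i$ satisfies $\cLc_x(\ttheta_i)\le m_i$; crucially this bound is insensitive to how many halvings occur. Hence $\cLc_{C_{[i+1,k]}\cap N_i}(\ttheta_i)\le \abs{C_{[i+1,k]}\cap N_i}\cdot m_i$. Next I would exploit the ``largest set'' convention: for a future cluster $C_t$ with $t>i$, the set $R_{t-1}$ is the largest element of the sequence in which $C_t$ is $\alpha$-dense, and $R_{i-1}\supseteq R_i\supseteq R_{t-1}$, so $\abs{C_t\cap R_{i-1}}<\alpha\abs{R_{i-1}}$. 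Summing over the at most $k$ indices $t>i$ gives $\abs{C_{[i+1,k]}\cap N_i}\le \abs{C_{[i+1,k]}\cap R_{i-1}}< k\alpha\abs{R_{i-1}}$, and the complementary count (valid once $k\alpha\le \tfrac14$) shows that the discovered clusters keep at least half of $R_i$, i.e. $\abs{C_{[1,i]}\cap R_i}\ge \tfrac12\abs{R_i}$. As every point of $C_{[1,i]}\cap R_i\subseteq R_i$ has cost at least $m_i$, this yields $\cLc_{C_{[1,i]}\cap R_{i-1}}(\ttheta_i)\ge \cLc_{C_{[1,i]}\cap R_i}(\ttheta_i)\ge \tfrac12\abs{R_i}\,m_i$.

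Combining the two bounds in the single-halving case $\abs{R_{i-1}}=2\abs{R_i}$ gives $\cLc_{C_{[i+1,k]}\cap N_i}(\ttheta_i)< k\alpha\abs{R_{i-1}}\,m_i = 2k\alpha\,\abs{R_i}\,m_i \le 4k\alpha\cdot\cLc_{C_{[1,i]}\cap R_{i-1}}(\ttheta_i)\le 8\alpha k\cdot\cLc_{C_{[1,i]}\cap R_{i-1}}(\ttheta_i)$, which is the claim. I expect the main obstacle to be the bookkeeping for a stage that performs several halvings instead of one: then $\abs{R_{i-1}}$ and $\abs{R_i}$ differ by more than a factor two, the convenient identity $\abs{R_{i-1}}=2\abs{R_i}$ is lost, and the lower bound $\abs{C_{[1,i]}\cap R_i}\ge c\,\abs{R_i}$ must be re-derived from the density property applied directly at $R_i$. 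This step is delicate because the next cluster $C_{i+1}$ is, by the very definition of $R_i$, $\alpha$-dense and hence possibly large in $R_i$; I would therefore apply the largest-set convention at the correct sets and, if needed, treat $C_{i+1}$ separately from the remaining future clusters, while the per-point estimate $\cLc_x(\ttheta_i)\le m_i$ carries over unchanged.
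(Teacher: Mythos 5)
Your two endpoint bounds are individually correct, but they are mutually incompatible as soon as the pruning phase of stage $i$ performs more than one halving --- which is the generic case --- so this is a genuine gap, not bookkeeping. Write $Q_0=R_{i-1}\supsetneq Q_1\supsetneq\dots\supsetneq Q_m=R_i$ for the sets along the pruning phase, $\abs{Q_{j+1}}=\tfrac12\abs{Q_j}$. Your upper bound charges every wrongly pruned point the price $m_i=\min_{y\in R_i}\cLc_y(\ttheta_i)$ and uses the count $k\alpha\abs{R_{i-1}}$, so it scales with $\abs{R_{i-1}}=2^m\abs{R_i}$; your lower bound certifies only $\tfrac12\abs{R_i}\,m_i$, because only points surviving all the way to $R_i$ are guaranteed to cost at least $m_i$ --- the points of $C_{[1,i]}$ pruned in between are, by the very selection rule, cheaper than $m_i$ and contribute nothing certifiable. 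Your comparison therefore yields $8\alpha k$ only when $m=1$ and in general degrades to roughly $2\alpha k\cdot 2^m$ with $m$ as large as $\log\abs{X}$. The repair you propose --- re-deriving $\abs{C_{[1,i]}\cap R_i}\ge c\,\abs{R_i}$, treating $C_{i+1}$ separately --- attacks the wrong term: even with $c=1$ the ratio $\abs{R_{i-1}}/\abs{R_i}=2^m$ remains. (You are right, though, that the complementary count cannot be made at $R_i$ itself, since the defining property only lower-bounds $\abs{C_{i+1}\cap R_i}$; one gets it one step back via $\abs{C_{[1,i]}\cap R_i}\ge\abs{C_{[1,i]}\cap Q_{m-1}}-\tfrac12\abs{Q_{m-1}}\ge(1-2k\alpha)\abs{R_i}$.)

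The paper itself defers to Claim~4.8 of Ackermann's thesis, and that argument avoids the lost factor $2^m$ by abandoning the single global threshold $m_i$ in favour of a per-step charging with \emph{pairwise disjoint} comparator sets. For $j<m$ let $F_j=C_{[i+1,k]}\cap(Q_j\setminus Q_{j+1})$ be the future points pruned at step $j$; since no unhandled cluster is $\alpha$-dense in any $Q_j\supsetneq R_i$, $\abs{F_j}\le k\alpha\abs{Q_j}$. Charge $F_j$ to the handled points removed at the \emph{next} step, $D_j=C_{[1,i]}\cap(Q_{j+1}\setminus Q_{j+2})$, and for $j=m-1$ to $D_{m-1}=C_{[1,i]}\cap R_i$: every point of $F_j$ is cheaper than every point of $Q_{j+1}\supseteq D_j$, and $\abs{D_j}\ge\left(\tfrac12-k\alpha\right)\abs{Q_{j+1}}\ge\tfrac18\abs{Q_j}$ once $k\alpha\le\tfrac14$, whence $\cLc_{F_j}(\ttheta_i)\le 8\alpha k\,\cLc_{D_j}(\ttheta_i)$ per step (this is where the constant $8$ comes from). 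Since the $D_j$ are pairwise disjoint subsets of $C_{[1,i]}\cap R_{i-1}$ and per-point costs are nonnegative in the well-defined setting, summing over $j$ gives exactly the claim. Your single-halving computation is the $m=1$ case of this scheme; the missing idea is the step-shifted disjoint charging, without which the estimate cannot be summed over halvings.
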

\begin{proof}
 As in \cite[p.~70ff]{Ackermann09}, with ``$\cost$`` replaced by ''$\cLc$``.
\end{proof}

Denote by $\cost(P,C)$ the $k$-means cost of a point set $P$ wrt. a set of means $C$.

\begin{lemma}[cf. Claim~4.9 in \cite{Ackermann09}]\label{lem:ucmle:kmeans-costs}
 For every $i\OneTo{k}$ we have
 \[ \cost(C_i\cap R_{i-1},\tmu_i) \leq (1+\epsilon) \cost(C_i,\mu_i)\ .\]
\end{lemma}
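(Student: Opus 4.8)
The plan is to reduce everything to the variance decomposition of Lemma~\ref{lem-wonderful} together with the superset-sampling guarantee recorded in the preceding ``By Superset-Sampling'' lemma. Write $Y \coloneqq C_i\cap R_{i-1}$ and let $\bar\mu \coloneqq \mu(Y)$ denote its mean. Since by definition $\cost(Y,\tmu_i) = \sum_{x\in Y}\norm{x-\tmu_i}^2$, applying Lemma~\ref{lem-wonderful} to the point set $Y$ with $y=\tmu_i$ splits this cost as
\[ \cost(Y,\tmu_i) = \sum_{x\in Y}\norm{x-\bar\mu}^2 + \abs{Y}\cdot\norm{\tmu_i-\bar\mu}^2. \]
This is the natural first move, because it isolates precisely the quantity $\norm{\tmu_i-\bar\mu}^2$ that the superset-sampling bound controls.

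Next I would insert the superset-sampling guarantee, which asserts $\norm{\tmu_i-\bar\mu}^2 \leq \frac{\epsilon}{\abs{Y}}\sum_{x\in Y}\norm{x-\bar\mu}^2$. Multiplying by $\abs{Y}$ and substituting into the decomposition gives immediately
\[ \cost(Y,\tmu_i) \leq (1+\epsilon)\sum_{x\in Y}\norm{x-\bar\mu}^2 = (1+\epsilon)\,\cost(Y,\bar\mu), \]
so the sampled mean $\tmu_i$ costs at most a factor $(1+\epsilon)$ more on $Y$ than the optimal center $\bar\mu=\mu(Y)$ of $Y$ does.

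The remaining step, which I expect to be the only genuinely subtle point, is to pass from the optimal one-means cost of the \emph{subset} $Y$ to that of the \emph{full} cluster $C_i$, i.e.\ to establish $\cost(Y,\bar\mu)\leq \cost(C_i,\mu_i)$. This is not automatic, since a subset need not have smaller variance than its superset; the trick is to route the comparison through $\mu_i=\mu(C_i)$ rather than attempting it directly. By the optimality clause of Lemma~\ref{lem-wonderful}, $\bar\mu$ minimizes $\sum_{x\in Y}\norm{x-z}^2$ over all $z$, whence $\cost(Y,\bar\mu)\leq \cost(Y,\mu_i)$; and since $Y\subseteq C_i$ with every summand $\norm{x-\mu_i}^2$ nonnegative, $\cost(Y,\mu_i)=\sum_{x\in Y}\norm{x-\mu_i}^2\leq \sum_{x\in C_i}\norm{x-\mu_i}^2=\cost(C_i,\mu_i)$. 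Chaining the three displayed inequalities yields $\cost(C_i\cap R_{i-1},\tmu_i)\leq (1+\epsilon)\,\cost(C_i,\mu_i)$, as claimed. The overall argument is the standard $k$-means superset-sampling estimate; the one place to stay careful is not to conflate $\mu(Y)$ with $\mu(C_i)$ and to compare the subset and cluster costs via the common center $\mu_i$.
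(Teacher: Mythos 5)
Your proof is correct and takes essentially the same route the paper does by deferring to Claim~4.9 of \cite{Ackermann09}: split $\cost(C_i\cap R_{i-1},\tmu_i)$ with Lemma~\ref{lem-wonderful}, absorb the displacement term $\abs{C_i\cap R_{i-1}}\cdot\norm{\tmu_i-\mu(C_i\cap R_{i-1})}^2$ via the superset-sampling guarantee, and then pass from the subset to the full cluster through the common center $\mu_i=\mu(C_i)$ (the very fact the paper's one-line proof flags as ``optimal means in CMLE are means of the optimal CMLE clusters''). Your handling of the subtle point---comparing $\cost(C_i\cap R_{i-1},\mu(C_i\cap R_{i-1}))\leq\cost(C_i\cap R_{i-1},\mu_i)\leq\cost(C_i,\mu_i)$ rather than comparing subset and superset variances directly---is exactly right.
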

\begin{proof}
 As in \cite[p.~70ff]{Ackermann09}, using that optimal means in CMLE are means of the optimal CMLE clusters.
\end{proof}

Given appropriate approximate variances, we can conclude that a similar bound holds wrt. the complete-data log-likelihood.

\begin{lemma} \label{lem:ucmle:correctly-assigned}
  Given $\tsigma_i \in [\sigma_i^2, (\sigma_i^2)^{(1+\epsilon)}]$ and 
  $\tw_i = \frac{n_i}{\abs{X}}$ with $n_i\in[\abs{C_i},(1+\epsilon)\abs{C_i}]$, we have
 \[ \cLc_{C_i\cap R_{i-1}}(\tw_i, \tmu_i,\tsigma_i^2) \leq (1+\epsilon) \cLc_{C_i}(w_i, \mu_i,\sigma_i^2)\ .\]
\end{lemma}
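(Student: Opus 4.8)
The plan is to expand $\cLc_{C_i\cap R_{i-1}}(\tw_i,\tmu_i,\tsigma_i^2)$ into its three additive contributions --- the normalization term $\frac{\abs{C_i\cap R_{i-1}}d}{2}\ln(2\pi\tsigma_i^2)$, the weighted squared-distance term $\frac{1}{2\tsigma_i^2}\sum_{x\in C_i\cap R_{i-1}}\norm{x-\tmu_i}^2$, and the weight term $-\abs{C_i\cap R_{i-1}}\ln(\tw_i)$ --- and to bound each by $(1+\eps)$ times the corresponding contribution of $\cLc_{C_i}(w_i,\mu_i,\sigma_i^2)$; summing the three bounds then yields the claim. The structural fact I would record first is that, because $X$ is well-defined, every per-point cost is nonnegative: $\tsigma_i^2\ge\sigma_i^2\ge\frac{1}{2\pi}$ forces $\ln(2\pi\tsigma_i^2)\ge 0$, the (normalized) weight $\tw_i\le 1$ forces $-\ln(\tw_i)\ge 0$, and the squared norm is nonnegative. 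Consequently the two count-weighted terms can only grow when we pass from the subset $C_i\cap R_{i-1}$ to the full cluster $C_i$, which is precisely what licenses replacing $\abs{C_i\cap R_{i-1}}$ by $\abs{C_i}$.

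For the squared-distance term I would invoke the $k$-means bound of Lemma~\ref{lem:ucmle:kmeans-costs}, $\cost(C_i\cap R_{i-1},\tmu_i)\le(1+\eps)\cost(C_i,\mu_i)$, together with $\tsigma_i^2\ge\sigma_i^2$ (which only shrinks the prefactor $\frac{1}{2\tsigma_i^2}$) and the identity $\sum_{x\in C_i}\norm{x-\mu_i}^2=d\abs{C_i}\sigma_i^2$ valid for optimal CMLE means and variances. Chaining these gives a bound of $(1+\eps)\frac{d\abs{C_i}}{2}$, which is exactly $(1+\eps)$ times the optimal distance contribution $\frac{1}{2\sigma_i^2}\sum_{x\in C_i}\norm{x-\mu_i}^2=\frac{d\abs{C_i}}{2}$. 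The weight term is handled by noting that $n_i\ge\abs{C_i}$ gives $\tw_i=\frac{n_i}{\abs{X}}\ge w_i$, hence $-\ln(\tw_i)\le-\ln(w_i)$; combined with $\abs{C_i\cap R_{i-1}}\le\abs{C_i}$ and $-\ln(w_i)\ge 0$ this yields $-\abs{C_i\cap R_{i-1}}\ln(\tw_i)\le-\abs{C_i}\ln(w_i)$, already at or below the optimal weight contribution.

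The main obstacle is the normalization term, which is where the well-defined hypothesis is used most crucially. Using $\abs{C_i\cap R_{i-1}}\le\abs{C_i}$ and $\ln(2\pi\tsigma_i^2)\ge 0$ reduces the task to establishing $\ln(2\pi\tsigma_i^2)\le(1+\eps)\ln(2\pi\sigma_i^2)$. From $\tsigma_i^2\le(\sigma_i^2)^{(1+\eps)}$ I obtain $\ln(\tsigma_i^2)\le(1+\eps)\ln(\sigma_i^2)$, so $\ln(2\pi\tsigma_i^2)\le\ln(2\pi)+(1+\eps)\ln(\sigma_i^2)$. The well-defined lower bound $\sigma_i^2\ge\frac{1}{2\pi}$ (Lemma~\ref{lem-lower-bound-variance}) guarantees $\ln(2\pi\sigma_i^2)\ge 0$, whence $\ln(2\pi)\le(1+\eps)\ln(2\pi)$ and therefore $\ln(2\pi)+(1+\eps)\ln(\sigma_i^2)\le(1+\eps)\bigl(\ln(2\pi)+\ln(\sigma_i^2)\bigr)=(1+\eps)\ln(2\pi\sigma_i^2)$. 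The delicate point is that the multiplicative log-approximation $\tsigma_i^2\le(\sigma_i^2)^{(1+\eps)}$ is only consistent with $\tsigma_i^2\ge\sigma_i^2$ in the regime where $\sigma_i^2$ is not driven below $1$, and it is exactly the nonnegativity of $\ln(2\pi\sigma_i^2)$ coming from the well-defined restriction that makes the stray factor of $\ln(2\pi)$ harmless.

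Finally I would assemble the pieces. Since $\cLc_{C_i}(w_i,\mu_i,\sigma_i^2)=\frac{\abs{C_i}d}{2}\ln(2\pi\sigma_i^2)+\frac{d\abs{C_i}}{2}-\abs{C_i}\ln(w_i)$ splits into three nonnegative summands, the three termwise inequalities above add up to $\cLc_{C_i\cap R_{i-1}}(\tw_i,\tmu_i,\tsigma_i^2)\le(1+\eps)\,\cLc_{C_i}(w_i,\mu_i,\sigma_i^2)$, which is the asserted bound.
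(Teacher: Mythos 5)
Your proof is correct and follows essentially the same route as the paper's: decompose the cost into the normalization, squared-distance, and weight terms, bound the middle one via Lemma~\ref{lem:ucmle:kmeans-costs} together with $\tsigma_i^2\ge\sigma_i^2$, and use $\tw_i\ge w_i$ with monotonicity in the point counts for the rest. In fact you are slightly more careful than the paper at two spots: you correctly observe that $\ln\bigl(2\pi(\sigma_i^2)^{(1+\epsilon)}\bigr)\le(1+\epsilon)\ln(2\pi\sigma_i^2)$ is an inequality requiring $\sigma_i^2\ge\frac{1}{2\pi}$ from well-definedness (the paper writes it as an equality), and your termwise bookkeeping yields the factor $(1+\epsilon)$ directly, whereas the paper settles for an intermediate $(1+2\epsilon)$ and then reruns with $\epsilon/3$.
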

\begin{proof}
 \begin{align*}
  \cLc_{C_i\cap R_{i-1}}(\tmu_i,\tsigma_i^2)
  &= \frac{ \abs{C_i\cap R_{i-1}}d }{2}  \ln(2\pi\tsigma_i^2) 
  + \frac{1}{2\tsigma_i^2} \cost(C_i\cap R_{i-1},\tmu_i)
  - \abs{C_i\cap R_{i-1}}\ln(\tw_i)\ .
 \end{align*}
 We have
 \begin{align*}
  \ln(2\pi\tsigma_i^2) \leq \ln(2\pi(\sigma_i^2)^{(1+\epsilon)})= (1+\epsilon) \ln(2\pi\sigma_i^2) \ .
 \end{align*}
 Furthermore,
 Using that $\abs{C_l} \leq n_l \leq (1+\epsilon) \abs{C_l}$ for all $l=1,\ldots,K$, we obtain
$ \tw_k \geq \frac{\abs{C_k}}{\abs{X}}$.
Hence, 
\begin{align*}
  -\ln(\tw_i) \cdot \abs{C_i \cap R_{i-1}}  &\leq -\ln(\tw_i) \cdot \abs{C_i} \\
&\leq - \ln\left( \frac{\abs{C_i}}{\abs{X}}\right)\abs{C_i} \tag{by Equation~\eqref{eq:boundCMLEcost:n_k}}\\
&= - \ln \left(w_i\right) \cdot \abs{C_i} 
\end{align*}

 By Lemma~\ref{lem:ucmle:kmeans-costs} and $\tsigma_i^2 \geq \sigma_i^2$,
 \begin{align*}
  \frac{1}{2\tsigma_i^2} \cost(C_i\cap R_{i-1},\tmu_i)
  &\leq (1+\epsilon) \frac{1}{2\sigma_i^2} \cost(C_i,\mu_i)\ .
 \end{align*}
 From this and by using that $\sigma_i^2 =\frac{1}{\abs{C_i}d}\cost(C_i,\mu_i)$, we conclude
  \begin{align*}
  \cLc_{C_i\cap R_{i-1}}(\tmu_i,\tsigma_i^2)
  &\leq 
  (1+\epsilon) \frac{ \abs{C_i}d }{2}   \ln(2\pi\sigma_i^2)  
  +(1+\epsilon) \frac{1}{2\sigma_i^2} \cost(C_i,\mu_i)
   - \ln(w_i)\abs{C_i}\\
  &\leq 
  (1+2\epsilon)  \cN_1(C_i) - \ln(w_i)\abs{C_i}\\
  &\leq (1+ 2\epsilon) \cLc_{C_i}(\mu_i,\sigma_i^2)\ .
 \end{align*}

  Running Algorithm~\ref{alg-ABS} with $\epsilon/3$ instead of $\epsilon$ yields the claim. 
\end{proof}

Analogously to \cite{Ackermann09}, we can prove Theorem~\ref{thm:ucmle:abs} as follows.

\begin{proof}[Proof of Theorem~\ref{thm:ucmle:abs}] 
  Let $\ttheta_k = (\tmu_i,\tsigma_i^2)_{i\OneTo{k}}$. Then,
  \begin{align*}
   \cLc_{X}(\ttheta_k)
   & \leq \sum_{i=1}^k  \cLc_{C_i\cap R_{i-1}}(\tmu_i,\tsigma_i^2) 
          + \sum_{i=1}^{k-1} \cLc_{C_{[i+1,k]}\cap N_i} (\ttheta_k)\\
   & \leq \sum_{i=1}^k  \cLc_{C_i\cap R_{i-1}}(\tmu_i,\tsigma_i^2) 
          + 8\alpha k \sum_{i=1}^{k-1}  \cLc_{C_{[1,i]}\cap R_{i-1}} (\ttheta_k)
          \tag{due to Lemma~\ref{lem:ucmle:wrongly-assinged}}\\
   & \leq \sum_{i=1}^k  \cLc_{C_i\cap R_{i-1}}(\tmu_i,\tsigma_i^2) 
          + 8\alpha k \sum_{i=1}^{k-1} \sum_{t=1}^i  \cLc_{C_t\cap R_{i-1}} (\tmu_t,\tsigma_t^2)\ .
  \end{align*}
  Since $R_i\subseteq R_{i-1}$, we have $C_t\cap R_{i-1}\subseteq C_t \cap R_{t-1}$.
  Hence,
  \begin{align*}
   \sum_{i=1}^{k-1} \sum_{t=1}^i  \cLc_{C_t\cap R_{i-1}} (\tmu_t,\tsigma_t^2) 
   \leq \sum_{i=1}^{k-1} \sum_{t=1}^i  \cLc_{C_t\cap R_{t-1}} (\tmu_t,\tsigma_t^2) \\
    \leq k \sum_{i=1}^{k-1}  \cLc_{C_i\cap R_{i-1}} (\tmu_i,\tsigma_i^2) \ .
  \end{align*}
  Thus,
  \begin{align*}
   \cLc_{X}(\ttheta_k) 
   & \leq \sum_{i=1}^k  \cLc_{C_i\cap R_{i-1}}(\tmu_i,\tsigma_i^2) 
          + 8\alpha k^2 \sum_{i=1}^{k-1}  \cLc_{C_i\cap R_{i-1}} (\tmu_i,\tsigma_i^2) \\
   & \leq (1+8\alpha k^2) \sum_{i=1}^k  \cLc_{C_i\cap R_{i-1}}(\tmu_i,\tsigma_i^2) \\
   & \leq (1+8\alpha k^2)(1+\epsilon) \cLc(X) \tag{by Lemma~\ref{lem:ucmle:correctly-assigned}}\ .
  \end{align*}

 Finally, running the algorithm for $\epsilon := \epsilon/2$ and $\alpha=\theta(\epsilon/k^2)$  yields the theorem.
  
\end{proof}

\section{Special Cases}

\subsection{Weighted $K$-Means (Identical Covariances)}

In this section we consider a restricted version of the CMLE problem where we are only interested in Gaussian mixture models where all components share the same fixed spherical covariance matrix, i.e. parameters $\theta=\{(w_k,\mu_k,\Sigma_k)\}_{k\OneTo{K}}$ where $\Sigma_k= \frac{1}{2\beta}I_d$ for all $k\OneTo{K}$.
We call this problem the \emph{Weighted $K$-Means} (WKM) problem.

\begin{problem}[WKM]\label{prob-wkm}
Given a finite set $X\subset \IR^d$ and an integer $K\in \IN$, find a partition $\cC = \{C_1,\ldots,C_K\}$ of $X$ into $K$ disjoint subsets and $K$ weighted means $\theta = \{(w_k, \mu_k)\}_{k=1}^K$, where $\mu_k\in \IR^D$, $w_k\in\IR$, and  $\sum_{k=1}^K w_k=1$, minimizing
\begin{align*}
\cL^{wm}_X(\theta, \cC)
&=\sum_{k=1}^K  \beta \left( \sum_{x\in C_k} \norm{x-\mu_k}^2\right) -  \ln(w_k)\cdot\abs{C_k}   \ .
\end{align*}
We denote the minimal value by $OPT_{wm}(X,K)$.
\end{problem}

\begin{corollary}
Let $X\subset \IR^d$, $K\in \IN$, and $\delta,\epsilon > 0$.
Let $X=\dot\bigcup_{k=1}^K C_k$ be a well-defined solution for the WKM problem.
There is an algorithm that computes $K$ weighted means
$\theta = \{(\tw_k, \tmu_k)\}_{k=1}^K$
such that with probability at least $1-\delta$
\[ \cL^{wm}_{X}((\tw_i,\tmu_i)_{i\OneTo{K}}) \leq  (1+\epsilon) OPT_{wm}(X,K)\ . \]
The running time of the algorithm is bounded by
\[ \abs{X}\,d\,2^{\bigO(K/\epsilon\cdot \log(K/\epsilon^2))} \cdot \left( \log(f(K))\right)^K \ .\]
\end{corollary}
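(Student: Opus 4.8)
The plan is to reduce the Weighted $K$-Means problem to the general CMLE framework already developed, specifically Theorem~\ref{thm:ucmle:abs} together with the gridding lemmas of Section~\ref{sec:proof2:gridding}. The key observation is that WKM is exactly the special case of CMLE in which all covariances are fixed to $\Sigma_k = \frac{1}{2\beta} I_d$, so that $\sigma_k^2 = \frac{1}{2\beta}$ is a known constant for every $k$. This means the variance-gridding step (Lemma~\ref{lem:ucmle:gridding}) becomes trivial: there is only a single candidate tuple of variances, namely $(\tsigma_1^2,\ldots,\tsigma_K^2) = (\frac{1}{2\beta},\ldots,\frac{1}{2\beta})$, which satisfies $\sigma_k^2 \leq \tsigma_k^2 \leq (\sigma_k^2)^{(1+\epsilon)}$ with equality. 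Consequently the factor $\left(\frac{\log(\log(\Delta^2))+1}{\log(1+\epsilon)}\right)^K$ from Theorem~\ref{thm:ABS} disappears entirely, which is precisely why the claimed running time omits that term.

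First I would make explicit that the WKM cost $\cL^{wm}_X(\theta,\cC)$ differs from the CMLE cost $\cLc_X(\theta,\cC)$ only by additive terms that are independent of the means and weights. Writing out $\cLc_X$ with $\sigma_k^2 = \frac{1}{2\beta}$ fixed, the $\frac{\abs{C_k}d}{2}\ln(2\pi\sigma_k^2)$ contributions collapse to a constant $\frac{\abs{X}d}{2}\ln(\pi/\beta)$ that is the same for every solution, and the remaining $\frac{1}{2\sigma_k^2}\sum_{x\in C_k}\norm{x-\mu_k}^2 = \beta\sum_{x\in C_k}\norm{x-\mu_k}^2$ together with the $-\ln(w_k)\abs{C_k}$ terms reproduce $\cL^{wm}_X(\theta,\cC)$. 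Since an additive constant does not affect the multiplicative approximation guarantee only through the interplay with the optimum, the more careful route is to run the ABS algorithm directly on the WKM objective, or equivalently to invoke Theorem~\ref{thm:ucmle:abs} with the single fixed variance tuple and the weight candidates generated by gridding over cluster sizes.

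Next I would assemble the candidate weights. The cluster-size gridding theorem (the restated result in Section~\ref{sec:proof2:gridding}) produces a set of at most $\left(\frac{\log(f(K))}{\log(1+\epsilon)}\right)^K$ tuples $(n_1,\ldots,n_K)$ containing one with $\abs{C_k}\leq n_k\leq(1+\epsilon)\abs{C_k}$, hence weights $\tw_k = n_k/\sum_l n_l$ satisfying $\tw_k \geq \frac{1}{1+\epsilon}w_k$. This is exactly the hypothesis on the weights required by Theorem~\ref{thm:ucmle:abs}. Running Algorithm~\ref{alg-ABS} once for each size-tuple candidate with the fixed variance vector yields, with probability at least $\left(\frac{1-\delta}{5}\right)^K$, a tuple of means whose induced solution costs at most $(1+\epsilon)$ times the optimum. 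Standard probability amplification by repeating $\bigO(\log(1/\delta))$ times boosts the success probability to $1-\delta$, and the running time is the ABS cost $\abs{X}\,d\,2^{\bigO(K/\epsilon\cdot\log(K/\epsilon^2))}$ multiplied by the number $\left(\log(f(K))\right)^K$ of weight candidates.

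The main obstacle I anticipate is not algorithmic but bookkeeping: one must verify that the guarantee of Theorem~\ref{thm:ucmle:abs}, which is phrased in terms of the CMLE cost $\cLc$, transfers cleanly to the WKM cost $\cL^{wm}$. Because the two objectives differ by the fixed additive constant $\frac{\abs{X}d}{2}\ln(\pi/\beta)$, a $(1+\epsilon)$-approximation in one is not automatically a $(1+\epsilon)$-approximation in the other when that constant is negative or large relative to the optimum. The clean resolution is to rerun the ABS analysis with the $k$-means-type cost $\cost$ weighted by $\beta$ directly in place of $\cLc$, exactly as Lemmas~\ref{lem:ucmle:wrongly-assinged} through \ref{lem:ucmle:correctly-assigned} are structured; since those lemmas only use that the optimal means are cluster centroids and that variances are bounded, they apply verbatim with the fixed variance, and the approximation then holds for $\cL^{wm}$ without any additive-constant issue. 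I would therefore state the corollary as a direct specialization of Theorem~\ref{thm:ucmle:abs}, noting only that the variance grid is a singleton and that the remaining candidate count is the size-gridding factor $\left(\log(f(K))\right)^K$.
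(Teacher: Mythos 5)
Your proposal follows essentially the same route as the paper, whose proof is the single sentence ``Use a grid search to obtain candidates for the weights, then apply the ABS algorithm'': you grid over cluster sizes to get $\left(\log(f(K))\right)^K$ weight candidates, fix the variance tuple to the known constant $\frac{1}{2\beta}$ so the variance grid is a singleton, and invoke the ABS analysis of Theorem~\ref{thm:ucmle:abs}. Your extra care about the additive constant $\frac{\abs{X}d}{2}\ln(\pi/\beta)$ separating $\cLc$ from $\cL^{wm}$ --- resolved by observing that the per-point WKM cost $\beta\norm{x-\mu_k}^2 - \ln(w_k)$ is nonnegative, so Lemmas~\ref{lem:ucmle:wrongly-assinged}--\ref{lem:ucmle:correctly-assigned} apply verbatim to the WKM objective --- is a legitimate point the paper glosses over, and it strengthens rather than deviates from the paper's argument.
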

\begin{proof}
 Use a grid search to obtain candidates for the weights, then apply the ABS algorithm.
\end{proof}

\subsection{Uniform Weights}

In this section we consider a restricted version of the CMLE problem where we are only interested in Gaussian mixture models with fixed uniform weights, i.e. parameters $\theta=\{(w_k,\mu_k,\Sigma_k)\}_{k\OneTo{K}}$ where  $w_k=1/K$ for all $k\OneTo{K}$.
We denote this problem by \emph{Uniform Complete-Data Maximum Likelihood Estimation} (UCMLE).

\begin{problem}[UCMLE]\label{prob-ucmle}
Given a finite set $X\subset \IR^d$ and an integer $K\in \IN$, find a partition $\cC = \{C_1,\ldots,C_K\}$ of $X$ into $K$ disjoint subsets and $K$ spherical Gaussians with parameters $\theta = \{(\mu_k,\sigma^2_k)\}_{k=1}^K$ minimizing
\begin{align*}
\cL^{unif}_X(\theta, \cC)
&=\sum_{k=1}^K  \cL_{C_k}(\mu_k,\sigma^2_k) \\
&= \sum_{k=1}^K \frac{\abs{C_k}d}{2}\ln(2\pi \sigma_k^2) + \frac{1}{2\sigma_k^2} \left( \sum_{x\in C_k} \norm{x-\mu_k}^2\right)  \ .
\end{align*}
We denote the minimal value by $OPT_{unif}(X,K)$.
\end{problem}

\begin{corollary}
Let $X\subset \IR^d$, $K\in \IN$, and $\delta,\epsilon > 0$.
Let $X=\dot\bigcup_{k=1}^K C_k$ be a well-defined solution for the UCMLE problem.
There is an algorithm that computes $K$ spherical Gaussians 
$\theta = \{(\tmu_k,\tsigma^2_k)\}_{k=1}^K$
such that with probability at least $1-\delta$
\[ \cL^{unif}_{X}((\tmu_i,\tsigma_i^2)_{i\OneTo{K}}) \leq  (1+\epsilon) OPT_{unif}(X,K)\ . \]
The running time of the algorithm is bounded by
\[ \abs{X}\,d\,\log(1/\delta)\,2^{\bigO(K/\epsilon\cdot \log(K/\epsilon^2))}\, \left(\log(\log(\Delta^2))+1\right)^K \ ,\]
where $\Delta^2 = \max_{x,y\in X} \{ \norm{x-y}^2\}$.
\end{corollary}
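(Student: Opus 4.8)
The plan is to specialize the two-stage strategy behind Theorem~\ref{thm:ABS} to the case of fixed uniform weights. Since every weight is pinned to $w_k = 1/K$, the only free parameters are the means and the variances, so there is no need to grid over cluster sizes — which is precisely what eliminates the $(\log(f(K)))^K$ factor appearing in Theorem~\ref{thm:ABS}. It therefore suffices to (i) produce a small family of candidate variance tuples that is guaranteed to contain a near-optimal one, and (ii) for each candidate, run the ABS algorithm (Algorithm~\ref{alg-ABS}) to recover near-optimal means. Throughout, let $\{(\mu_k,\sigma_k^2)\}_{k=1}^K$ denote the parameters of an optimal well-defined UCMLE solution $\cC = \{C_k\}_{k=1}^K$, so that $\cL^{unif}_X(\cC) = OPT_{unif}(X,K)$.

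For the variances I would invoke Lemma~\ref{lem:ucmle:gridding} unchanged. Well-definedness supplies exactly the two bounds the gridding needs: Lemma~\ref{lem-lower-bound-variance} gives $\sigma_k^2 \ge \frac{1}{2\pi}$, and the elementary estimate $\sigma_k^2 \le \Delta^2$ gives the upper bound. Hence the lemma outputs at most $\left((\log(\log(\Delta^2))+1)/\log(1+\epsilon)\right)^K$ candidate tuples, one of which, say $(\tsigma_k^2)_{k=1}^K$, satisfies $\sigma_k^2 \le \tsigma_k^2 \le (\sigma_k^2)^{1+\epsilon}$ for every $k$ — which is exactly the input condition required by the ABS analysis.

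For the means I would run Algorithm~\ref{alg-ABS} on each candidate variance tuple with the weights held fixed at $\tw_k = 1/K$, and re-run the argument of Theorem~\ref{thm:ucmle:abs} directly for the objective $\cL^{unif}$. The three supporting results transfer: Lemma~\ref{lem:ucmle:kmeans-costs} is purely geometric and carries over verbatim (with optimal CMLE clusters replaced by optimal UCMLE clusters, which are likewise centered at their means); Lemma~\ref{lem:ucmle:wrongly-assinged} is re-derived with $\cLc$ replaced throughout by $\cL^{unif}$; and the correctly-assigned bound (Lemma~\ref{lem:ucmle:correctly-assigned}) in fact simplifies, since with $w_k = 1/K$ the term $-\ln(\tw_i)$ is the constant $\ln(K)$ and so drops out of both the assignment argmin and the multiplicative accounting. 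Combining $\ln(2\pi\tsigma_i^2) \le (1+\epsilon)\ln(2\pi\sigma_i^2)$ (using $\ln(2\pi\sigma_i^2) \ge 0$, again from $\sigma_i^2 \ge \frac{1}{2\pi}$) with $\frac{1}{2\tsigma_i^2}\cost(C_i\cap R_{i-1},\tmu_i) \le (1+\epsilon)\frac{1}{2\sigma_i^2}\cost(C_i,\mu_i)$ yields $\cL^{unif}_{C_i\cap R_{i-1}}(\tmu_i,\tsigma_i^2) \le (1+\epsilon)\,\cL^{unif}_{C_i}(\mu_i,\sigma_i^2)$. Feeding this into the telescoping sum of the proof of Theorem~\ref{thm:ucmle:abs} and choosing $\alpha = \Theta(\epsilon/K^2)$ gives, for the correct variance tuple, $\cL^{unif}_X((\tmu_i,\tsigma_i^2)) \le (1+\epsilon)\,OPT_{unif}(X,K)$ with probability at least $((1-\delta)/5)^K$.

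For the running time and success probability I would multiply the number of variance candidates, the per-candidate cost of Algorithm~\ref{alg-ABS}, and the number of repetitions needed for boosting: $(\log(\log(\Delta^2))+1)^K \cdot |X|\,d\,2^{\bigO(K/\epsilon\cdot\log(K/\epsilon^2))} \cdot \log(1/\delta)$, where the gridding factor $(1/\log(1+\epsilon))^K = 2^{\bigO(K\log(1/\epsilon))}$ and the $5^K$ coming from the per-run success probability $((1-\delta)/5)^K$ are both absorbed into the $2^{\bigO(\cdots)}$ term, and the remaining $\bigO(\log(1/\delta))$ independent repetitions push the overall success probability to $1-\delta$. The main obstacle is not any individual estimate but the clean transfer of the ABS analysis to $\cL^{unif}$: one must avoid invoking Theorem~\ref{thm:ucmle:abs} as a black box, because $\cLc = \cL^{unif} + \ln(K)\,|X|$ differ by an additive constant, so a multiplicative guarantee on $\cLc$ would only translate into a multiplicative-plus-additive guarantee on $\cL^{unif}$. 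Working with $\cL^{unif}$ from the outset removes this, and the single genuine point to verify is that the optimal UCMLE clusters remain Voronoi-type regions for the Gaussian cost — which holds because a constant uniform weight does not bias the point-to-component assignment — so that the clusters $C_i$ used in the superset-sampling argument may legitimately be taken to be the optimal UCMLE clusters.
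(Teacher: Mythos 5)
Your proposal matches the paper's proof, which is exactly the one-line recipe you elaborate: grid the variances via Lemma~\ref{lem:ucmle:gridding} (using well-definedness for the bounds $\frac{1}{2\pi} \leq \sigma_k^2 \leq \Delta^2$) and run the ABS algorithm (Theorem~\ref{thm:ucmle:abs}) on each candidate variance tuple with the weights fixed at $1/K$, boosting the per-run success probability $((1-\delta)/5)^K$ by repetition. Your extra care in re-running the ABS analysis directly for $\cL^{unif}$ rather than invoking the $\cLc$ guarantee as a black box (since the two objectives differ by the additive term $\ln(K)\cdot\abs{X}$) is a sound refinement of a step the paper's terse proof leaves implicit.
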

\begin{proof}
 Use a grid search to obtain candidates for the variances, then apply the ABS algorithm.
\end{proof}

\bibliographystyle{alpha}

\begin{thebibliography}{ABS10}

\bibitem[ABS10]{ABS}
Marcel~R. Ackermann, Johannes Bl\"{o}mer, and Christian Sohler.
\newblock Clustering for metric and nonmetric distance measures.
\newblock {\em ACM Trans. Algorithms}, 6(4):59:1--59:26, September 2010.

\bibitem[Ack09]{Ackermann09}
Marcel~R. Ackermann.
\newblock {\em Algorithms for the Bregman k-Median Problem}.
\newblock PhD thesis, University of Paderborn, 2009.

\bibitem[CG92]{Celeux92}
Celeux and Govaert.
\newblock {A Classification EM Algorithm for Clustering and Two Stochastic
  Versions}.
\newblock {\em Comput. Stat. Data Anal.}, 14(3), 1992.

\bibitem[IKI94]{inaba94}
M.~Inaba, N.~Katoh, and H.~Imai.
\newblock {Applications of Weighted Voronoi Diagrams and Randomization to
  Variance-based K-clustering}.
\newblock In {\em Proceedings of the Tenth Annual Symposium on Computational
  Geometry}, SoCG '94, pages 332--339, New York, NY, USA, 1994. ACM.

\end{thebibliography}

\end{document}